\documentclass{article}

\usepackage{arxiv}

\usepackage[utf8]{inputenc} 
\usepackage[T1]{fontenc}    
\usepackage{hyperref}       
\usepackage{url}            
\usepackage{booktabs}       
\usepackage{amsfonts}       
\usepackage{nicefrac}       
\usepackage{microtype}      
\usepackage{lipsum}
\usepackage{graphicx}
\graphicspath{ {./images/} }

\usepackage{bm}

\newcommand{\bmt}[1]{\tilde{\bm{#1}}}

\usepackage{algorithm}
\usepackage{algorithmic}
\usepackage{subcaption}
\usepackage{amsmath,amsthm,amssymb}
\DeclareUnicodeCharacter{211D}{\mathbf{R}}

\newtheorem{proposition}{Proposition}
\newtheorem{lemma}{Lemma}
\newtheorem{theorem}{Theorem}

\title{Prescriptive PCA: Dimensionality Reduction for Two-stage Stochastic Optimization}

\author{
 Long He \\
  School of Business\\
  George Washington University\\
  Washington, DC 20052 \\
  \texttt{longhe@gwu.edu} \\
   \And
 Ho-Yin Mak \\
  McDonough School of Business\\
  Georgetown University\\
  Washington, DC 20057 \\
  \texttt{hoyin.mak@georgetown.edu} \\
}

\begin{document}
\maketitle
\begin{abstract}
In this paper, we consider the alignment between an upstream dimensionality reduction task of learning a low-dimensional representation of a set of high-dimensional data and a downstream optimization task of solving a stochastic program parameterized by said representation. In this case, standard dimensionality reduction methods (e.g., principal component analysis) may not perform well, as they aim to maximize the \emph{amount} of information retained in the representation and do not generally reflect the \emph{importance} of such information in the downstream optimization problem. To address this problem, we develop a \emph{prescriptive} dimensionality reduction framework that aims to minimize the degree of suboptimality in the optimization phase. For the case where the downstream stochastic optimization problem has an expected value objective, we show that prescriptive dimensionality reduction can be performed via solving a distributionally-robust optimization problem, which admits a semidefinite programming relaxation. Computational experiments based on a warehouse transshipment problem and a vehicle repositioning problem show that our approach significantly outperforms principal component analysis with real and synthetic data sets. 
\end{abstract}


\section{Introduction}

Common practice of data analytics to business planning and operations usually involves a \emph{pipeline} structure, consisting of a sequential set of processes converting collected raw data to business decisions to be implemented. For prescribing operational decisions, mathematical programming models (e.g., for production planning) often fit in the downstream stages of the pipeline, with inputs fed from upstream processes, e.g., learning statistical models for product demand with machine learning methods. This approach where learning and optimization are performed separately and sequentially, while intuitive to implement, can be suboptimal as the learning phase often does not account for how its outputs are used as inputs in the downstream optimization phase \cite{elmachtoub2022smart}.

In this paper, we consider a prescriptive analytics pipeline that aims to prescribe a set of optimal decisions in a stochastic optimization problem given high-dimensional input data. Specifically, we consider two sequential phases: (1) a dimensionality reduction phase that learns a low-dimensional representation of the probability distribution that generates the high-dimensional data; and (2) a stochastic optimization phase that prescribes an optimal solution to a stochastic program whose parameter uncertainty is governed by said distribution. Such a scenario has a variety of practical applications. For example, an urban mobility platform may optimize repositioning of its fleet based on a probability distribution of travel demand learned from data. In such applications, while the raw origin-destination travel data could be high dimensional, the variations in the underlying demand distribution are often governed by a low-dimensional structure involving a smaller number of factors.  

Our research objectives are as follows. First, we investigate the limitation of using standard dimensionality reduction methods (such as principal component analysis, PCA) in such a two-phase scenario. Particularly, we demonstrate that standard PCA can fail to identify factors that are relevant to the downstream stochastic program. 
Second, observing the shortcoming of PCA in aligning the two phases of the problem, we propose a prescriptive alternative that learns a low-dimensional representation of data that minimizes downstream suboptimality. Using distributionally-robust optimization techniques, we show that the problem of minimizing a proxy (upper bound) for the degree of suboptimality in the downstream stochastic program can be formulated as a bi-convex problem, to which a local optimal solution can be found via an alternating algorithm that involves solving a semidefinite program in each iteration. Third, using synthetic and real data, we investigate the effectiveness of our approach based on a joint production and inventory allocation problem for a supply chain network, and a vehicle repositioning problem for an urban mobility platform. 

\subsection{Related Literature}
Our work is related to the literature on mathematical programming and machine learning.  

\emph{Interface of Machine Learning and Optimization.}
Aligned with the prevalent application of machine learning, numerous researchers have studied end-to-end techniques that integrate machine learning and mathematical programming in prescriptive settings. A prevalent approach that integrates machine learning and mathematical programming in prescriptive settings is ``predict-then-optimize'', which involves first making accurate predictions from data, using machine learning tools, and then solving optimization problems taking such predictions as input \cite{ferreira2016analytics,glaeser2019optimal}. 
Noting that the criteria for improving predictions and improving decisions are often not aligned, a growing stream of recent work looks into more integrated, end-to-end approaches to guide decisions directly from historical data that leverage contextual information \cite{ban2019big,ban2019dynamic,bertsimas2020predictive}. 
To address the potential misalignment between the loss function for a predictive model and the objective function in the downstream optimization model, \cite{elmachtoub2022smart} define suitable loss functions that take the downstream optimization problem into account when measuring errors in predictions. They further elaborate on how to train predictive models using these loss functions under this smart predict-then-optimize framework. \cite{el2019generalization} further provide generalization bounds for the framework. These mentioned papers focus on integrating predictive models, trained through supervised learning, with a downstream optimization phase. Our paper, on the other hand, considers the upstream phase of dimensionality reduction, a class of unsupervised learning. As opposed to making accurate predictions to inform the downstream optimization task, we aim to identify a low-dimensional space of features that is informative for the downstream optimization problem.

\emph{Dimensionality Reduction and Optimization.}
PCA has been adopted as the standard approach for linear dimensionality reduction given a sample covariance matrix \cite{eckart1936approximation}. 
Casting the dimensionality reduction problem as one of minimizing the error in approximating a matrix subject to structural (including rank) constraints, researchers have proposed mathematical-programming-based approaches to different variants of the problem \cite{d2004direct,d2008optimal,bertsimas2022solving,bertsimas2017certifiably}. 
Although the studies mentioned above take a mathematical programming approach as in our paper, they consider the classic objective of minimizing reconstruction error. As we argue later, when the low-dimensional model is fed into a subsequent stochastic programming problem, this objective of error minimization is not necessarily aligned with one of identifying good solutions in the sense of the downstream objective function. To address this issue, \cite{kao2013learning} propose the directed PCA approach that estimates the covariance matrix by balancing between the PCA objective and empirical optimization. While its aim is similar to ours, both our setting and methodological approach are significantly different. In particular, \cite{kao2013learning} consider a downstream single-stage stochastic convex optimization problem with uncertain objective coefficients where the decision variables are unconstrained; in contrast, we consider a downstream two-stage stochastic program with recourse decisions that depend on the realization of uncertainty. Therefore, the low-dimensional representation governs the space of recourse decisions in the downstream problem. Further, the presence of first-stage decisions in our setting requires an approach that yields a covariance matrix that is without using the mean (i.e., invariant to the first-stage decisions) of the data. This is one of the motivations for us to use distributionally-robust optimization rather than Bayesian optimization as in \cite{kao2013learning}.

\emph{Decision Rules in Two-Stage Optimization under Uncertainty.}
Our setting concerns learning a low-dimensional representation to characterize the uncertainty associated with a two-stage stochastic program with recourse. Conceptually, this objective closely links with the literature on decision rules in multi-stage optimization with recourse. In this literature, instead of allowing recourse decisions to be optimized to the specific realizations of uncertainty, they are confined to the space of parametric functions, known as decision rules, of the realized uncertain parameters. This enables the problem to be (heuristically) solved by optimizing over the space of parameters of these decision rule functions. The decision rule approach has been adopted in both robust optimization and stochastic programming settings \cite{ben2004adjustable,bertsimas2010optimality,chen2008linear,see2010robust,goh2010distributionally,kuhn2011primal}. 
Various studies in the literature have pointed out the importance of careful parameterization of the primitive uncertainties of the problem in enhancing the performances of LDRs and their generalizations \cite{chen2009uncertain,zhen2018adjustable,bertsimas2019adaptive}. 
 Similarly to these works, our analysis suggests that learning an appropriate initial representation from data can help significantly improve the performance of linear decision rules in a data-driven stochastic programming setting. 

\section{Dimensionality Reduction for Stochastic Optimization}\label{sec:dim_red}
We consider a two-stage stochastic program with recourse:
\begin{eqnarray}
\min_{ \mathbf{x} \in \mathbf{X}}&& \mathbf{c}^T \mathbf{x} + E[h(\bmt{z} - \mathbf{D}^T \mathbf{x} )] \label{eq:1st_stage} \\
\mbox{ where} && h(\bm{z}) = \min_{\mathbf{y}}  \mathbf{b}^T \mathbf{y}, \mbox{ s.t. } \mathbf{A} \mathbf{y} \ge   \bm{z}. \label{eq:2nd_stage}
\end{eqnarray}
In \eqref{eq:1st_stage}, the first-stage decision variables $\mathbf{x}$ are chosen under uncertainty, as characterized by the random variable $\bmt{z}$ (in $\mathbb{R}^n$); then, once the values for $\bmt{z}$ are realized, the second-stage decisions $\mathbf{y}$ are chosen to optimize the recourse problem \eqref{eq:2nd_stage}. We assume that the problem has complete recourse, i.e., \eqref{eq:2nd_stage} is feasible for any value of $\bm{z}$. This implies strong duality:
\begin{equation}
h(\bm{z}) = \max_{\mathbf{w} \ge 0} \mathbf{w}^T \bm{z}, \mbox{ s.t. } \mathbf{A}^T \mathbf{w} = \mathbf{b}.\label{eq:2nd_stage_dual}
\end{equation}
We consider the case where $n$ is large, i.e., $\bmt{z}$ resides in a high-dimensional space. 
In a data-driven setting, the distribution of $\bmt{z}$ is unknown; Instead, a set of training data is available. Let $\boldsymbol{\mu}$ and $\boldsymbol{\Sigma}$ denote estimates of the mean and covariance matrix from data (e.g., the sample mean and covariance). 
With high dimensionality of $\bmt{z}$, it is common practice to model it with a low-dimensional factor-based representation. It is known that evaluating the expectation of the random objective function for a stochastic program is $\# P$-hard \cite{hanasusanto2016comment}. As a computationally-efficient approach, decision rule models consider the $n$-dimensional uncertain problem parameters to be linearly dependent on a set of $k$  \emph{primitive uncertainties} or \emph{factors} \cite{chen2008linear, goh2010distributionally}, where $k << n$, and aim to optimize decision rules defined on said factors. Thus, the effectiveness of decision rule approaches critically depends on identifying such a low-dimensional factor model that closely represents the uncertainties pertaining to the original problem.


An intuitive approach would be to apply a standard dimensionality reduction algorithm on $\bmt{z}$  (such as PCA) and then feeding the resulting model to the stochastic program \eqref{eq:1st_stage}. However, this na{\"i}ve sequential approach would not perform well generally, because the dimensionality reduction algorithm does not take into account the downstream optimization task. For example, one may apply PCA to identify the rank-$k$ projection that captures the maximal amount of variance in the data. Intuitively, this corresponds to finding the $k$ basis directions along which the data exhibits the largest variation; However, these are not generally the most \emph{relevant} directions of variation for the downstream stochastic program (e.g., for defining effective decision rules). To address this limitation, we propose a \emph{prescriptive} dimensionality reduction framework that identifies a low-dimensional projection of the data that minimizes a measure of suboptimality in the downstream stochastic program. 

\subsection{The Limitation of PCA}

To illustrate, we consider the evaluation of the downstream stochastic program's objective (or more specifically, the component that depends on the model of uncertainty, i.e., the recourse objective $h(\cdot)$) based on a projection of the data onto some lower-dimensional subspace. In particular, suppose each data point $\bm{z}$ is projected onto a $k$-dimensional subspace as $\hat{\bm{z}} = \mathbf{V} \mathbf{V}^T \bm{z}$ where $\mathbf{V} \in \mathbb{R}^{n \times k}$ and $rank(\mathbf{V}) = k$. Note that $ \mathbf{V} \mathbf{V}^T$ is a symmetric $n \times n$ matrix with rank $k$. For example, in the case of PCA, we have $\mathbf{V} = \mathbf{V}_{[k]}$, the $n \times k$ matrix whose columns correspond to the eigenvectors associated with the $k$ largest eigenvectors. The second-stage objective value under the projected data is 
\begin{eqnarray}
h(\hat{\bm{z}}) = h(\mathbf{V} \mathbf{V}^T \bm{z}) &=& \max_{\mathbf{w} \ge 0} \mathbf{w}^T \mathbf{V} \mathbf{V}^T \bm{z}, \mbox{ s.t. } \mathbf{w} \in \boldsymbol{P},  \label{eq:2nd_stage_proj}
\end{eqnarray}
\noindent where the polyhedron $\boldsymbol{P} = \{\mathbf{w} \ge 0 | \mathbf{A}^T \mathbf{w} = b \}$.
Then, the following suggests that the second-stage objective value evaluated under the projected data is equivalent to the optimal objective value of a counterpart problem defined over the projected feasible region: $h(\hat{\bm{z}}) = \{\max  \hat{\mathbf{w}}^T \bm{z}, \mbox{ s.t. } \hat{\mathbf{w}} \in \hat{\mathbf{P}} \}$ where $\hat{\boldsymbol{P}} = \{ \hat{\mathbf{w}} | \hat{\mathbf{w}}=  \mathbf{V} \mathbf{V}^T \mathbf{w},  \mathbf{w} \in \boldsymbol{P} \}$.

Under PCA, the data is projected onto the $k$-eigenspace of the covariance matrix. Thus, when the (dual) problem is not \emph{aligned} with said eigenspace, the PCA solution could perform badly. In particular, if the projected polyhedron $\hat{\mathbf{P}}$ is orthogonal to the first $k$ eigenvectors of $\boldsymbol{\Sigma}$ (i.e., the columns of $\mathbf{V}_{[k]}$), the recourse objective under the PCA projection will have $h(\hat{\bm{z}}) \equiv 0$ for all $\bm{z}$. That is, the PCA solution may yield a projection that, while capturing the maximum \emph{amount} of variation in the data, fails to capture any  \emph{relevant} variation in terms of optimizing the second-stage problem. This occurs if the data is projected onto a subspace (the $k$-eigenspace) that is orthogonal to the \emph{dual} feasible region of the recourse problem.

\subsection{Prescriptive PCA} \label{sec:PPCA}
To address the above limitation of PCA, we propose an alternative to PCA, which we refer to as prescriptive PCA (PPCA) that aligns with the downstream stochastic program. Formulating the prescriptive PCA problem as a mathematical program, we will show that a distributionally-robust bound on the expected reconstruction error can be computed by solving semidefinite programs. 

Following the previous discussion, we seek a projection $\mathbf{V} \mathbf{V}^T \bmt{z}$ that yields a small expected reconstruction error (or loss) in terms of the second-stage objective value, i.e.,
\begin{equation*}
L( \mathbf{V}) = \left|E[h(\bmt{z})] - E[h( \mathbf{V} \mathbf{V}^T \bmt{z})] \right|.
\end{equation*}
To this end, we derive an upper bound on $L( \mathbf{V})$ that can be computed efficiently.

Recall that we seek an approximation independent of the mean of $\bmt{z}$, denoted $\boldsymbol{\mu}$. Let $\bmt{z}_0 = \bmt{z} - \boldsymbol{\mu}$ be the centered random variable. We seek an upper bound on $L( \mathbf{V})$ that only depends on $\bmt{z}_0$, but not $\boldsymbol{\mu}$. Following \eqref{eq:2nd_stage_proj}, we have:
\begin{proposition} \label{prop:subadditive}
	Suppose the linear program \eqref{eq:2nd_stage} has complete recourse. Let $\bm{z}_0$ be a realization of  $\bmt{z}_0$. 	
	Then, for any $\bm{z}_1, \bm{z}_e$ such that $\bm{z}_1 + \bm{z}_e = \bm{z}_0$, it holds that:
	$$h(\boldsymbol{\mu} + \bm{z}_0) \le h(\boldsymbol{\mu} +  \bm{z}_1) + h(\bm{z}_e) \mbox{, and } h(\boldsymbol{\mu} + \bm{z}_1) \le h(\boldsymbol{\mu} + \bm{z}_0) + h(-\bm{z}_e).$$
\end{proposition}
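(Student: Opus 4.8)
The plan is to recognize that the recourse function $h$ is a support function and then lean entirely on the subadditivity that support functions enjoy. Under complete recourse, strong duality gives the representation \eqref{eq:2nd_stage_dual}, i.e., $h(\bm{z}) = \max_{\mathbf{w} \in \boldsymbol{P}} \mathbf{w}^T \bm{z}$ with $\boldsymbol{P} = \{\mathbf{w} \ge 0 \mid \mathbf{A}^T \mathbf{w} = \mathbf{b}\}$. This is precisely the support function of the polyhedron $\boldsymbol{P}$, and the whole proposition will follow from a single structural property of such functions together with two elementary substitutions.

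The key step I would isolate is the generic subadditivity bound $h(\bm{a} + \bm{b}) \le h(\bm{a}) + h(\bm{b})$ for arbitrary $\bm{a}, \bm{b} \in \mathbb{R}^n$. The argument is one line: letting $\mathbf{w}^\star \in \boldsymbol{P}$ attain the maximum defining $h(\bm{a} + \bm{b})$, the same $\mathbf{w}^\star$ is feasible for the maximizations defining $h(\bm{a})$ and $h(\bm{b})$ separately, so that
\begin{equation*}
h(\bm{a} + \bm{b}) = (\mathbf{w}^\star)^T \bm{a} + (\mathbf{w}^\star)^T \bm{b} \le \max_{\mathbf{w} \in \boldsymbol{P}} \mathbf{w}^T \bm{a} + \max_{\mathbf{w} \in \boldsymbol{P}} \mathbf{w}^T \bm{b} = h(\bm{a}) + h(\bm{b}).
\end{equation*}
Complete recourse is exactly what makes this valid, as it guarantees both that $\boldsymbol{P} \neq \emptyset$ (so the support-function representation holds) and that $h$ is finite-valued.

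Having established subadditivity, I would obtain the two claimed inequalities by instantiating it with the decompositions dictated by the hypothesis $\bm{z}_1 + \bm{z}_e = \bm{z}_0$. For the first inequality, take $\bm{a} = \boldsymbol{\mu} + \bm{z}_1$ and $\bm{b} = \bm{z}_e$, whose sum is $\boldsymbol{\mu} + \bm{z}_1 + \bm{z}_e = \boldsymbol{\mu} + \bm{z}_0$, yielding $h(\boldsymbol{\mu} + \bm{z}_0) \le h(\boldsymbol{\mu} + \bm{z}_1) + h(\bm{z}_e)$ directly. For the second inequality, I would first rewrite $\boldsymbol{\mu} + \bm{z}_1 = (\boldsymbol{\mu} + \bm{z}_0) + (-\bm{z}_e)$ using $\bm{z}_1 = \bm{z}_0 - \bm{z}_e$, and then apply subadditivity with $\bm{a} = \boldsymbol{\mu} + \bm{z}_0$ and $\bm{b} = -\bm{z}_e$, giving $h(\boldsymbol{\mu} + \bm{z}_1) \le h(\boldsymbol{\mu} + \bm{z}_0) + h(-\bm{z}_e)$.

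I do not expect a genuine obstacle here: the only care needed is the bookkeeping in the second inequality, where the constraint $\bm{z}_1 + \bm{z}_e = \bm{z}_0$ must be used to substitute $\bm{z}_1 = \bm{z}_0 - \bm{z}_e$ before invoking subadditivity, and noting that $-\bm{z}_e$ is an admissible argument of $h$ since subadditivity was proved for all of $\mathbb{R}^n$. The conceptual content is simply the observation that $h$, being a maximum of linear functions over a fixed feasible set, is sublinear, from which both bounds are immediate.
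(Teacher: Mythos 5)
Your proof is correct and follows essentially the same route as the paper: both rest on the dual (support-function) representation $h(\bm{z}) = \max_{\mathbf{w} \in \boldsymbol{P}} \mathbf{w}^T\bm{z}$, the observation that a pointwise maximum of linear functions is subadditive, and the same two decompositions $\boldsymbol{\mu}+\bm{z}_0 = (\boldsymbol{\mu}+\bm{z}_1)+\bm{z}_e$ and $\boldsymbol{\mu}+\bm{z}_1 = (\boldsymbol{\mu}+\bm{z}_0)+(-\bm{z}_e)$. The only cosmetic difference is that the paper writes the maximum over a finite set of dual vertices $\mathbf{w}_j$, $j\in\{1,\dots,J\}$, whereas you maximize over the polyhedron $\boldsymbol{P}$ itself, which changes nothing in the argument.
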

\begin{proof} We note that
    \begin{align*}
h(\boldsymbol{\mu} + \bm{z}_0) 
&= \max_{j \in \{1,\cdots, J\}} \left[\mathbf{w}_j^T (\mathbf{\mu} + \bm{z}_1) + \mathbf{w}_j^T \bm{z}_e \right]\\
&\le  \max_{j \in \{1,\cdots, J\}} \mathbf{w}_j^T (\mathbf{\mu} + \bm{z}_1) + \max_{j \in \{1,\cdots, J\}} \mathbf{w}_j^T  \bm{z}_e \\
&=  h(\boldsymbol{\mu} +  \bm{z}_1) + h(\bm{z}_e) .
\end{align*}
Because $\bm{z}_1   = \bm{z}_0- \bm{z}_e$, it follows similarly that $h(\boldsymbol{\mu} + \bm{z}_1) \le h(\boldsymbol{\mu} +  \bm{z}_0) + h(-\bm{z}_e)$ \qedhere
\end{proof}

Proposition \ref{prop:subadditive} implies that the error in evaluating the objective value under the approximation is bounded by the objective value evaluated under the error term. More specifically, the expected approximation error is bounded as follows.
\begin{proposition} \label{prop:bound_error}
Consider an approximation $\bmt{z}_0 \approx \bmt{z}_1$, with error $\bmt{z}_e = \bmt{z}_0 - \bmt{z}_1$ (with probability one). The absolute error on the evaluated expectation of the recourse problem is bounded above by:
\begin{eqnarray}
\left| E[h(\boldsymbol{\mu} + \bmt{z}_1)]-  E[h(\boldsymbol{\mu} + \bmt{z}_0)] \right| \le \max\left\{  
	E[h(\bmt{z}_e)],  E[h(-\bmt{z}_e)]\right\}. \label{eq:bound_error}
\end{eqnarray} 
\end{proposition}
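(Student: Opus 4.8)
The plan is to derive the bound directly from the two pointwise inequalities already established in Proposition~\ref{prop:subadditive}, by instantiating them at the realized error term, taking expectations, and then combining the resulting one-sided bounds into a two-sided (absolute-value) bound.

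First I would apply Proposition~\ref{prop:subadditive} with the splitting $\bm{z}_0 = \bm{z}_1 + \bm{z}_e$, which yields, for every realization, the pair
$$h(\boldsymbol{\mu} + \bm{z}_0) - h(\boldsymbol{\mu} + \bm{z}_1) \le h(\bm{z}_e), \qquad h(\boldsymbol{\mu} + \bm{z}_1) - h(\boldsymbol{\mu} + \bm{z}_0) \le h(-\bm{z}_e).$$
Since these hold with probability one, I would take expectations over $\bmt{z}_0, \bmt{z}_1, \bmt{z}_e$ and invoke monotonicity of the expectation to obtain
$$E[h(\boldsymbol{\mu} + \bmt{z}_0)] - E[h(\boldsymbol{\mu} + \bmt{z}_1)] \le E[h(\bmt{z}_e)], \qquad E[h(\boldsymbol{\mu} + \bmt{z}_1)] - E[h(\boldsymbol{\mu} + \bmt{z}_0)] \le E[h(-\bmt{z}_e)].$$

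The second step is to read these as bounds on the signed error $\Delta := E[h(\boldsymbol{\mu} + \bmt{z}_1)] - E[h(\boldsymbol{\mu} + \bmt{z}_0)]$: the first inequality gives $-\Delta \le E[h(\bmt{z}_e)]$, i.e.\ $\Delta \ge -E[h(\bmt{z}_e)]$, while the second gives $\Delta \le E[h(-\bmt{z}_e)]$. Hence $|\Delta| \le \max\{E[h(\bmt{z}_e)],\, E[h(-\bmt{z}_e)]\}$, which is exactly \eqref{eq:bound_error}.

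The argument is essentially immediate once Proposition~\ref{prop:subadditive} is in hand, so the only point requiring care is the passage from the almost-sure pointwise inequalities to the inequalities in expectation. The hard part, such as it is, is ensuring the expectations are well defined: because $h$ is the optimal value of a linear program with complete recourse, it is finite everywhere and, by strong duality~\eqref{eq:2nd_stage_dual}, equals a pointwise maximum of finitely many linear functions, so it is convex, piecewise linear, and measurable. Integrability then follows under a mild moment condition on $\bmt{z}_e$ (e.g.\ a finite first moment), which the data-driven setting with a well-defined covariance estimate $\boldsymbol{\Sigma}$ supplies; given this, monotonicity of the expectation applies termwise and the stated bound follows.
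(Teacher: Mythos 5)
Your proposal is correct and follows essentially the same route as the paper's proof: instantiate Proposition~\ref{prop:subadditive} at the splitting $\bmt{z}_0 = \bmt{z}_1 + \bmt{z}_e$, take expectations of the two one-sided inequalities, and combine them into the absolute-value bound \eqref{eq:bound_error}. The only difference is that you make explicit the passage from almost-sure inequalities to expectations (measurability and integrability of $h$) and the sign case analysis, both of which the paper leaves implicit.
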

\begin{proof}
    Following Proposition \ref{prop:subadditive}, it holds that:
$E[h(\boldsymbol{\mu} + \bmt{z}_0) -  h(\boldsymbol{\mu} + \bmt{z}_1)] \le E[h(\bmt{z}_2)]$ and $E[h(\boldsymbol{\mu} + \bmt{z}_1) -  h(\boldsymbol{\mu} + \bmt{z}_0)] \le E[h(-\bmt{z}_2)]$, \mbox{ which imply \eqref{eq:bound_error}.} 
\qedhere
\end{proof}

The inequality \eqref{eq:bound_error} bounds the error of the approximating $E[h(\boldsymbol{\mu} + \bmt{z}_0)]$ by $E[h(\boldsymbol{\mu} + \bmt{z}_1)]$. This bound can serve as a surrogate of the degree of suboptimality of a chosen approximation $\bmt{z}_1$ to $\bmt{z}_0$ that satisfies desired structural properties (e.g., has a low-rank covariance matrix). A key feature of this bound is that it is independent of the mean $\boldsymbol{\mu}$, and thus is also invariant up to the addition of any constant to $\bmt{z}$. This is a critical property, as approximating the stochastic program \eqref{eq:1st_stage} requires an approximation for $E[h(\bmt{z} - \mathbf{D}^T \mathbf{x})]$ that holds for any first-stage decision $\mathbf{x}$ (where the deterministic term $\mathbf{D}^T \mathbf{x}$ can be absorbed in the mean).

The bound \eqref{eq:bound_error} is not necessarily tight in general. In particular, if the random variable $\boldsymbol{\mu} + \bmt{z}_0$ is split into two components, $\boldsymbol{\mu} + \bmt{z}_1$ and $\bmt{z}_e$ with comparable weights, it is conceivable that the bound is loose. On the other hand, the bound becomes tighter if the approximation $\boldsymbol{\mu} + \bmt{z}_1$ carries dominant weight compared with the residual term $\bmt{z}_e$, which tends to be the case as the right-hand side of \eqref{eq:bound_error} are to be minimized. Next, we show that a tight bound on this surrogate can be computed efficiently, and propose an algorithm to minimize it.

\subsection{Distributionally-Robust Bound}
Proposition \ref{prop:bound_error} suggests a surrogate for the reconstruction error in approximating the expected second-stage objective value under a given projection $\bmt{z}_1 = \mathbf{V} \mathbf{V}^T \bmt{z}_0$, by evaluating the expected value of the second-stage objective under the residual $\bmt{z}_0$.
Ideally, evaluating the bound in Proposition \ref{prop:bound_error} requires knowledge of the distribution of $\bmt{z}_0$. 
However, in practice, it is desirable to evaluate this bound without making distributional assumptions. Invoking results from the distributionally-robust optimization literature, we show that a bound on this expected value can be computed by solving a semidefinite program. This bound enables us to formulate a parsimonious PPCA procedure that uses only the covariance matrix as a sufficient statistic. We use the following as a direct result of Theorem 2 in \cite{natarajan2017reduced}.
\begin{proposition}\label{prop:persistency}
Suppose the mean and covariance matrix of $\bmt{z}$ are given by $\boldsymbol{\mu}$ and $\boldsymbol{\Sigma}$, respectively. Then, under any distribution of $\bmt{z}$, $E[h(\bmt{z})] \le \bar{h}(\boldsymbol{\mu},\boldsymbol{\Sigma})$, where:
\begin{equation}
	\begin{array}{r@{}l}
		\bar{h}(\boldsymbol{\mu}, \boldsymbol{\Sigma}) \equiv  
		\max_{\mathbf{p,Y,X}} \quad &tr(\mathbf Y)\\
		\mbox{s.t.} \quad &\mathbf{A}^T \mathbf{p}=\mathbf{b}\\
		&diag(\mathbf{A}^T \mathbf{X} \mathbf{A})=\mathbf{b}^2\\
		&\begin{pmatrix} 
			1 &   \boldsymbol{\mu}^T & \mathbf p^T\\
			\boldsymbol{\mu} & \boldsymbol{\Sigma}  & \mathbf{Y}^T \\
			\mathbf p &\mathbf{ Y} & \mathbf{X}
		\end{pmatrix} \succeq 0\\
		& \mathbf{p,X} \ge 0
	\end{array} \label{eq:persistency}
\end{equation}
\end{proposition}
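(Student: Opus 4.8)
The plan is to read $\bar h(\boldsymbol\mu,\boldsymbol\Sigma)$ in \eqref{eq:persistency} as the worst-case expected recourse value $\sup_{\mathbb P} E_{\mathbb P}[h(\bmt{z})]$ over all distributions $\mathbb P$ with mean $\boldsymbol\mu$ and covariance $\boldsymbol\Sigma$, and to recognize this as an instance of the moment-based persistency bound of \cite{natarajan2017reduced}. By strong duality \eqref{eq:2nd_stage_dual}, $h(\bm z)=\max\{\mathbf w^T\bm z:\mathbf w\in\boldsymbol P\}$ is the optimal value of a linear program whose \emph{objective} coefficients are the random vector $\bmt{z}$ and whose feasible region is the fixed polyhedron $\boldsymbol P=\{\mathbf w\ge 0:\mathbf A^T\mathbf w=\mathbf b\}$. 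This is exactly the setting of Theorem 2 in \cite{natarajan2017reduced}, so the task reduces to specializing their general program to $\boldsymbol P$ and checking that the variables $(\mathbf p,\mathbf Y,\mathbf X)$ and the constraints in \eqref{eq:persistency} are the correct instantiation.

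Since only the inequality $E_{\mathbb P}[h(\bmt{z})]\le\bar h(\boldsymbol\mu,\boldsymbol\Sigma)$ is claimed, I would give a self-contained argument via a moment (feasibility) construction, which also exposes the meaning of each variable. Fix any $\mathbb P$ with the prescribed moments and, for each realization $\bm z$, let $\mathbf w^\ast(\bm z)$ be an optimal vertex solution of \eqref{eq:2nd_stage_dual}, chosen measurably. Define $\mathbf p=E[\mathbf w^\ast]$, $\mathbf Y=E[\mathbf w^\ast\bmt{z}^T]$, and $\mathbf X=E[\mathbf w^\ast(\mathbf w^\ast)^T]$. The claim then follows once I show that this triple is feasible for \eqref{eq:persistency} and attains objective value $E_{\mathbb P}[h(\bmt{z})]$, because $\bar h$ is the maximum over all feasible triples.

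The verification proceeds constraint by constraint. Taking expectations of $\mathbf A^T\mathbf w^\ast=\mathbf b$ gives $\mathbf A^T\mathbf p=\mathbf b$; squaring each scalar equality $(\mathbf A^T\mathbf w^\ast)_k=b_k$ and taking expectations gives $(\mathbf A^T\mathbf X\mathbf A)_{kk}=b_k^2$, i.e. $diag(\mathbf A^T\mathbf X\mathbf A)=\mathbf b^2$; nonnegativity $\mathbf w^\ast\ge 0$ yields $\mathbf p\ge 0$ and the entrywise bound $\mathbf X\ge 0$. The positive-semidefinite constraint holds because the displayed matrix is the second-moment (Gram) matrix of the joint random vector $(1,\bmt{z}^T,(\mathbf w^\ast)^T)^T$: its $\bmt{z}$-block is $E[\bmt{z}\bmt{z}^T]=\boldsymbol\Sigma+\boldsymbol\mu\boldsymbol\mu^T$, which is exactly the information carried by $(\boldsymbol\mu,\boldsymbol\Sigma)$, and any such Gram matrix is automatically positive semidefinite. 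Finally, by LP optimality $\mathbf w^\ast(\bm z)^T\bm z=h(\bm z)$, so $tr(\mathbf Y)=E[(\mathbf w^\ast)^T\bmt{z}]=E_{\mathbb P}[h(\bmt{z})]$, giving the desired objective value.

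The main obstacle, and the part that genuinely needs \cite{natarajan2017reduced}, is the tightness direction: \eqref{eq:persistency} is the semidefinite \emph{relaxation} of the tight completely positive program for $\sup_{\mathbb P}E_{\mathbb P}[h]$ (the doubly nonnegative conditions $\mathbf X\ge 0$ and $\mathbf X\succeq 0$ relax the completely positive cone), so establishing that it is a valid---and, where applicable, exact---bound rather than merely a feasibility certificate is what their theorem supplies. Two further technical points deserve care: the measurable selection of $\mathbf w^\ast(\bm z)$ together with integrability of the resulting moments under complete recourse (so that $\mathbf p,\mathbf Y,\mathbf X$ are finite), and the bookkeeping that matches the block written with $\boldsymbol\Sigma$ to the second-moment matrix $\boldsymbol\Sigma+\boldsymbol\mu\boldsymbol\mu^T$ of the construction via a centering congruence. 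Since the proposition asserts only the inequality, I would present the feasibility construction above as the proof and cite Theorem 2 of \cite{natarajan2017reduced} for the relaxation and tightness statements.
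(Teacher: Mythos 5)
The paper offers no proof of Proposition \ref{prop:persistency}: it simply cites Theorem 2 of \cite{natarajan2017reduced} and observes that \eqref{eq:persistency} is a relaxation (hence still an upper bound) of the tight completely positive bound proved there. Your self-contained feasibility construction---a measurably selected optimal vertex $\mathbf w^\ast(\bm z)$ of the dual polyhedron, with $\mathbf p=E[\mathbf w^\ast]$, $\mathbf Y=E[\mathbf w^\ast\bmt{z}^T]$, $\mathbf X=E[\mathbf w^\ast(\mathbf w^\ast)^T]$, constraint-by-constraint verification, and the trace identity $tr(\mathbf Y)=E[h(\bmt{z})]$---is therefore a genuinely different route, and it is in fact the same style of argument the paper itself uses in the appendix to prove Lemma \ref{lm:relax} for Theorem \ref{thm:exact}. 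The vertex selection also disposes of your integrability worry, since $\mathbf w^\ast$ then takes only finitely many values.

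There is, however, one genuine gap, and it sits exactly at the step you deferred to ``bookkeeping \ldots via a centering congruence'': the positive semidefiniteness constraint. The Gram matrix of $(1,\bmt{z}^T,(\mathbf w^\ast)^T)^T$ has middle block $E[\bmt{z}\bmt{z}^T]=\boldsymbol\Sigma+\boldsymbol\mu\boldsymbol\mu^T$, whereas \eqref{eq:persistency} as written has $\boldsymbol\Sigma$ there, and no centering congruence reconciles the two: congruence by the centering matrix yields
\begin{equation*}
\begin{pmatrix} 1 & \mathbf 0^T & \mathbf p^T\\ \mathbf 0 & \boldsymbol\Sigma & (\mathbf Y-\mathbf p\boldsymbol\mu^T)^T \\ \mathbf p & \mathbf Y-\mathbf p\boldsymbol\mu^T & \mathbf X \end{pmatrix}\succeq 0,
\end{equation*}
which simultaneously zeroes the $\boldsymbol\mu$ blocks and shifts $\mathbf Y$ to $\mathbf Y-\mathbf p\boldsymbol\mu^T$ (whose trace is $E[h(\bmt z)]-\boldsymbol\mu^T\mathbf p$, the wrong objective value), so your triple is in general infeasible for \eqref{eq:persistency} read literally with $\boldsymbol\Sigma$ the covariance. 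Indeed, under that literal reading the proposition itself fails: take $n=m=1$, $\mathbf A=(1)$, $\mathbf b=(1)$, so $\boldsymbol P=\{1\}$ and $h(z)=z$; if $\mu^2>\sigma^2$, the principal submatrix $\left(\begin{smallmatrix}1&\mu\\ \mu&\sigma^2\end{smallmatrix}\right)$ cannot be positive semidefinite, the feasible set of \eqref{eq:persistency} is empty, yet $E[h(\bmt z)]=\mu$ is finite. The statement, and your proof, are correct only under the reading---consistent with \cite{natarajan2017reduced}, and with how the paper itself equates such matrices with Gram matrices in the proof of Lemma \ref{lm:relax}---that the middle block of \eqref{eq:persistency} is the second-moment matrix $E[\bmt z\bmt z^T]$. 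Since the paper invokes the bound only with $\boldsymbol\mu=\mathbf 0$ (as $\bar h(\mathbf 0,\boldsymbol\Sigma_e)$), the two readings coincide in all of its uses; but your proof, to be complete, must state this reading explicitly rather than appeal to a congruence that does not deliver the constraint as written.
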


In Proposition \ref{prop:persistency}, problem  \eqref{eq:persistency} is a relaxation of the tight upper bound proved in Theorem 2 of \cite{natarajan2017reduced}. Using this result, we can bound the expected value on the right hand side of \eqref{eq:bound_error} by $\bar{h}(\mathbf{0}, \boldsymbol{\Sigma}_e)$, where $\boldsymbol{\Sigma}_e$ is the covariance matrix of the residuals $\bmt{z}_e$ (which has zero mean by construction). Note that both $\bmt{z}_e$ and $-\bmt{z}_e$ have the same mean (zero) and covariance matrix $\boldsymbol{\Sigma}_e$, thus applying the bound gets rid of the max operator in \eqref{eq:bound_error}. 
This yields a \emph{distribution-free} performance bound on any approximation $\bmt{z}_0 \approx \bmt{z}_1$.

Then, the best (e.g., low-rank) approximation can be obtained by minimizing the performance bound. Specifically, we look for a rank-$k$ projection $\bmt{z}_1 = \mathbf{V} \mathbf{V}^T \bmt{z}_0$ that minimizes the distribution-free performance bound. In the spirit of parsimony, we operate over the space of covariance matrices. Thus we focus on the covariance matrix of the projected data, $\boldsymbol{\Sigma}_1$. 
Note that its eigenvalue decomposition is given by  $\boldsymbol{\Sigma}_1 = \mathbf{V} \mathbf{E}_k \mathbf{V}^T $, where $\mathbf{E}_k$ is a $k \times k$ matrix with positive values in the diagonal entries and zero elsewhere. Thus, the problem of optimizing over the $n \times k$ matrix $\mathbf{V}$ is equivalent to optimizing over  $\boldsymbol{\Sigma}_1$ subject to $rank( \boldsymbol{\Sigma}_1 ) \le k$. Then, we can formulate the PPCA problem for dimensionality reduction as:
\begin{eqnarray}
		\min_{\mathbf{\Sigma}_1 ,  \mathbf{\Sigma}_e} \quad &&  \bar{h}(\mathbf{0}, \mathbf{\Sigma}_e) +  \theta \langle \mathbf{\Sigma}_1, \mathbf{\Sigma}_e \rangle \label{eq:min-max} \\
		\mbox{s.t.} \quad && \mathbf{\Sigma}_1 + \mathbf{\Sigma}_e = \mathbf{\Sigma}_0 \label{eq:Sigma}\\
		&&  \mathbf{\Sigma}_1 ,  \mathbf{\Sigma}_e \succeq 0 \label{eq:PSD} \\
		&& \mathbf{\Sigma}_1 \in \mathbb{W}, \label{eq:rank}
\end{eqnarray}
where $\mathbb{W}=\left\{\mathbf S\in \mathbb R^{n\times n}: \mbox{rank}(\mathbf S)\le k\right\}$ denotes the set of $n \times n$ symmetric matrices with rank not exceeding $k$ and $\langle \cdot, \cdot\rangle$ denotes the matrix inner product, i.e., $\langle \mathbf{A}, \mathbf{B} \rangle = tr(\mathbf{A}^T \mathbf{B})$. Instead of minimizing the Frobenius norm of the reconstruction error (for the covariance matrix) in PCA, PPCA minimizes the distributionally-robust bound on expected optimality loss regularized with $\langle \mathbf{\Sigma}_1, \mathbf{\Sigma}_e \rangle$, which is a relaxation of the requirement on the columns of $\mathbf{\Sigma}_1$ and $\mathbf{\Sigma}_e$ be orthogonal (i.e., $\langle \mathbf{\Sigma}_1, \mathbf{\Sigma}_e \rangle=0$). The first two constraints in the above formulation require the second moments of both $\bmt{z}_1$ and $\bmt{z}_e$ to be valid, i.e., there exist valid multivariate distributions with the corresponding covariance matrices that sum up to $\mathbf{\Sigma}_0$. 
The constraint \eqref{eq:rank} is written in a generic form such that it could enforce any desired structural properties on the projection, e.g., low rank and sparsity. 

The subproblem \eqref{eq:persistency} to compute $\bar{h}(\mathbf{0}, \mathbf{\Sigma}_e)$ is in the maximization form. Thus, problem \eqref{eq:min-max} is a min-max problem; in fact, it can be interpreted as a robust optimization problem. 
In the literature, min-max robust optimization formulations are typically reformulated as minimization problems using the duality of the inner problem. Following this standard approach, we have the following result. 

\begin{proposition} \label{prop:PPCA_SDP}
The PPCA problem can be reformulated as:
\begin{eqnarray}
		\min \quad &&\boldsymbol{\alpha}^T \mathbf{b} + \boldsymbol{\beta}^T \mathbf{b}^2 + g_1 +  \langle 
\mathbf{\Lambda}+\theta \mathbf{\Sigma}_1, \mathbf{\Sigma}_e \rangle\label{eq:min-min} \\
\mbox{s.t.}		&& \eqref{eq:Sigma}-\eqref{eq:rank}, \nonumber \\
		&& \mathbf{G} = \begin{pmatrix} 
		g_1 &   \boldsymbol{g}_2^T & \mathbf g_3^T\\
		\boldsymbol{g}_2 & \mathbf{G}_{22}  & \mathbf G_{32}^T \\
		\mathbf g_3 & \mathbf G_{32} & \mathbf G_{33}
	\end{pmatrix} \succeq 0 \label{eq:G} \\	
		&& \begin{array} {r@{}ll}
		& g_1 = \nu  &   \quad \mathbf{G}_{22} = \mathbf{\Lambda} \\
		& \mathbf{g}_2 = \frac{1}{2}\boldsymbol{\gamma} & \quad \mathbf{G}_{32} = - \frac{1}{2}\mathbf{I} \\
		&   \mathbf{g}_3 \le \frac{1}{2}\mathbf{A} \boldsymbol{\alpha} & \quad \mathbf{G}_{33} = \sum_{i=1}^m \beta_i \mathbf{a}_i \mathbf{a}_i^T,
	\end{array} \label{eq:dual-constraints}
\end{eqnarray}
\noindent provided there exists a feasible solution satisfying \eqref{eq:G} with strict positive definiteness.  
\end{proposition}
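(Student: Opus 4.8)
The plan is to exploit the two-level structure of \eqref{eq:min-max}: the outer problem minimizes over $(\boldsymbol{\Sigma}_1,\boldsymbol{\Sigma}_e)$, while the inner evaluation $\bar h(\mathbf 0,\boldsymbol{\Sigma}_e)$ is itself the maximization SDP \eqref{eq:persistency} of Proposition \ref{prop:persistency} specialized to $\boldsymbol{\mu}=\mathbf 0$ and $\boldsymbol{\Sigma}=\boldsymbol{\Sigma}_e$. The standard robust-optimization device is to replace this inner maximization by its SDP dual, which is a minimization; substituting the dual back into \eqref{eq:min-max} then merges the two levels into a single joint minimization over the outer matrices \emph{together with} the inner dual variables, collapsing the min--max into the min--min form \eqref{eq:min-min}--\eqref{eq:dual-constraints}. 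The key point making this valid is strong duality of the inner SDP, which is exactly what the proviso ``there exists a feasible solution satisfying \eqref{eq:G} with strict positive definiteness'' (a Slater condition) supplies.

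Concretely, I would first write \eqref{eq:persistency} as a standard-form SDP in the single PSD variable
\[
\mathbf M=\begin{pmatrix}1&\mathbf 0^T&\mathbf p^T\\ \mathbf 0&\boldsymbol{\Sigma}_e&\mathbf Y^T\\ \mathbf p&\mathbf Y&\mathbf X\end{pmatrix}\succeq 0,
\]
whose $(1,1)$, $(1,2)$, and $(2,2)$ blocks are \emph{fixed} to $1$, $\mathbf 0$, and $\boldsymbol{\Sigma}_e$, and whose remaining blocks $\mathbf p,\mathbf Y,\mathbf X$ are the decision variables, with linear objective $tr(\mathbf Y)$, affine constraints $\mathbf A^T\mathbf p=\mathbf b$ and $diag(\mathbf A^T\mathbf X\mathbf A)=\mathbf b^2$, and sign constraints $\mathbf p,\mathbf X\ge 0$. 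I would then introduce a multiplier $\boldsymbol{\alpha}$ for $\mathbf A^T\mathbf p=\mathbf b$, a multiplier $\boldsymbol{\beta}$ for $diag(\mathbf A^T\mathbf X\mathbf A)=\mathbf b^2$, a PSD matrix $\mathbf G\succeq 0$ dual to $\mathbf M\succeq 0$ partitioned conformally as in \eqref{eq:G}, and nonnegative multipliers for $\mathbf p,\mathbf X\ge 0$.

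The dual objective is obtained by pairing $\mathbf G$ against the fixed blocks of $\mathbf M$: this yields $g_1\cdot 1$, $2\langle \mathbf g_2,\mathbf 0\rangle=0$, and $\langle \mathbf G_{22},\boldsymbol{\Sigma}_e\rangle$, together with the equality-constraint contributions $\boldsymbol{\alpha}^T\mathbf b+\boldsymbol{\beta}^T\mathbf b^2$. Writing $\mathbf G_{22}=\boldsymbol{\Lambda}$ and carrying the outer regularizer $\theta\langle\boldsymbol{\Sigma}_1,\boldsymbol{\Sigma}_e\rangle$ (which does not involve the inner variables) reproduces the objective \eqref{eq:min-min} exactly. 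The dual constraints come from forcing the coefficient of each free block of $\mathbf M$ to vanish so the inner maximum stays finite: matching $\mathbf Y$ against $tr(\mathbf Y)=\langle\mathbf I,\mathbf Y\rangle$ forces $\mathbf G_{32}=-\tfrac12\mathbf I$, the factor $\tfrac12$ arising because $\mathbf Y$ occupies the symmetric $(2,3)$ and $(3,2)$ blocks; matching $\mathbf X$ against $diag(\mathbf A^T\mathbf X\mathbf A)=(\langle\mathbf a_i\mathbf a_i^T,\mathbf X\rangle)_i$ yields $\mathbf G_{33}=\sum_i\beta_i\mathbf a_i\mathbf a_i^T$; and matching $\mathbf p$, whose sign constraint $\mathbf p\ge 0$ relaxes the block equality to an inequality, yields $\mathbf g_3\le\tfrac12\mathbf A\boldsymbol{\alpha}$. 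The entries $g_1=\nu$ and $\mathbf g_2=\tfrac12\boldsymbol{\gamma}$ remain free, being dual to the fixed $(1,1)$ and $(1,2)$ blocks. Invoking strong duality, the dual optimum equals $\bar h(\mathbf 0,\boldsymbol{\Sigma}_e)$, and since substituting a minimization for the inner maximization turns a min--max into a min over both sets of variables, the result follows.

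I expect the main obstacle to be bookkeeping rather than conceptual: carrying out the SDP dual of this block-structured, partially-fixed matrix $\mathbf M$ precisely — in particular propagating the symmetric $\tfrac12$ factors onto the off-diagonal blocks $\mathbf G_{32}$, $\mathbf g_2$, $\mathbf g_3$, and converting the elementwise sign constraints $\mathbf p,\mathbf X\ge 0$ into the correct direction of inequality in \eqref{eq:dual-constraints}. A secondary but essential point is justifying the interchange of the inner maximization with its dual, i.e.\ verifying that the stated strict-feasibility (Slater) condition on \eqref{eq:G} guarantees zero duality gap, so that the reformulation \eqref{eq:min-min} is \emph{exact} rather than merely an upper bound on \eqref{eq:min-max}.
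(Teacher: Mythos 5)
Your proposal is correct and follows essentially the same route as the paper: dualize the inner maximization SDP \eqref{eq:persistency} via Lagrangian duality (with $\mathbf{G}\succeq 0$ as the multiplier of the moment matrix, coefficient-matching on the free blocks $\mathbf{p},\mathbf{Y},\mathbf{X}$ producing \eqref{eq:dual-constraints}, and the Slater condition giving zero duality gap), then absorb the resulting minimization into the outer problem to collapse the min--max into \eqref{eq:min-min}. The paper's only cosmetic difference is that it pins the fixed blocks with auxiliary equality constraints $u=1$, $\mathbf{v}=\mathbf{0}$, $\mathbf{S}=\boldsymbol{\Sigma}_e$ whose multipliers $\nu,\boldsymbol{\gamma},\boldsymbol{\Lambda}$ are exactly the entries $g_1,2\mathbf{g}_2,\mathbf{G}_{22}$ that you instead leave as free named blocks of $\mathbf{G}$.
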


The problem \eqref{eq:min-min} is not a convex optimization problem and is difficult to solve in general. 
In particular, the objective \eqref{eq:min-min} is non-convex due to the bilinear inner product $\langle 
\mathbf{\Lambda}+\theta \mathbf{\Sigma}_1, \mathbf{\Sigma}_e \rangle $. Furthermore, fixing any feasible value of $\mathbf{\Lambda}+\theta \mathbf{\Sigma}_1$ and solving the remaining problem yields an upper bound on the original problem. This suggests the alternating algorithm~\ref{alg:alternating}.

\begin{algorithm}[tb]
   \caption{Alternating Algorithm}
   \label{alg:alternating}
\begin{algorithmic}
   \STATE {\bfseries Input:} problem parameters $\mathbf{A}$, $\mathbf{b}$, $\boldsymbol{\Sigma}_0$, and penalty $\theta$
   \STATE Initialize $\hat{\boldsymbol{\Sigma}}_e$ such that $\hat{\boldsymbol{\Sigma}}_e \succeq 0$ and $\hat{\boldsymbol{\Sigma}}_1 = \boldsymbol{\Sigma}_0 - \hat{\boldsymbol{\Sigma}}_e \succeq 0$.
   \REPEAT
   \STATE{1. Given $\mathbf{\Sigma}_e = \hat{\boldsymbol{\Sigma}}_e$ and $\mathbf{\Sigma}_1 = \hat{\boldsymbol{\Sigma}}_1$, solve problem \eqref{eq:min-min} over $(\boldsymbol{\nu}, \boldsymbol{\gamma}, \alpha, \boldsymbol{\beta}, \boldsymbol{\rho}, \mathbf{\Lambda}, \mathbf G)$. Save the optimal value of $\boldsymbol{\Lambda}$ as $\hat{\boldsymbol{\Lambda}}$.} 
   \STATE{2. Given $\mathbf{\Lambda}+\theta \mathbf{\Sigma}_1=\hat{\mathbf{\Lambda}}+\theta \hat{\boldsymbol{\Sigma}}_1$, solve problem \eqref{eq:min-min} over $(\boldsymbol{\nu}, \boldsymbol{\gamma}, \alpha, \boldsymbol{\beta}, \boldsymbol{\rho}, \boldsymbol \Lambda, \mathbf{G}, \mathbf{\Sigma}_1, \mathbf{\Sigma}_e)$. Save the optimal value of $\mathbf{\Sigma}_e$ as $\hat{\boldsymbol{\Sigma}}_e$ and $\mathbf{\Sigma}_1$ as $\hat{\boldsymbol{\Sigma}}_1$.}
   \UNTIL{no improvement in the objective value}
   \STATE {\bfseries Output:} $\mathbf{\Sigma}_1$
\end{algorithmic}
\end{algorithm}

It is easy to see that, the objective value (weakly) improves in every iteration of the algorithm. Therefore, though the algorithm may not necessarily converge to the global optimal solution, it will converge to a local minimum. 
We also remark that it is possible to initialize the algorithm with the PCA solution. Thus the algorithm, even if terminated early, can guarantee to produce a solution better than PCA in terms of the worst-case expected performance bound. 

We further remark that it is straightforward to include any convex regularization term in problem \eqref{eq:min-min}. In particular, penalizing the Frobenius norm of the reconstruction error $\boldsymbol{\Sigma}_e$ works well in our computational studies. For example, \eqref{eq:min-min} can be replaced with:
\begin{equation*}
\min \eta \left(\boldsymbol{\alpha}^T \mathbf{b} + \boldsymbol{\beta}^T \mathbf{b}^2 + g_1 +  \langle 
\mathbf{\Lambda}+\theta \mathbf{\Sigma}_1, \mathbf{\Sigma}_e \rangle\right) +  (1 - \eta) \| \boldsymbol{\Sigma}_e \|_F + \rho\|\Sigma_1 \|_{*} , \label{eq:min-min-reg}
\end{equation*}
\noindent for some $\eta \in [0, 1]$. The cases with $\eta = 0$ and $\eta=1$ reduce to the conventional PCA and the unregularized PPCA \eqref{eq:min-min}, respectively. Our computational experiments suggest that a lightly regularized objective (e.g., $\eta = 0.95$) tends to work better than the unregularized version.

\subsection{Tight Distributionally Robust Bound}
Algorithm \ref{alg:alternating} yields a low-rank covariance matrix and computes an upper bound on the worst-case error in approximating the expected second-stage objective value. In general, this bound is not necessarily tight for three reasons. First, the performance bound \eqref{eq:bound_error} makes use of a subadditivity relation and is not tight in general. Second, the distributionally-robust formulation \eqref{eq:persistency} is not exactly tight, in that there is no guarantee that a feasible distribution (or a sequence thereof) exactly (or asymptotically) achieves this bound. Third, the alternative algorithm converges to a local minimum to problem \eqref{eq:min-min} which is not necessarily globally optimal. 

Our computational experiments show that the algorithm is effective in identifying low-rank data projections that perform well in downstream optimization problems in practice. Yet, it is of theoretical interest to close these three gaps. Below, we provide a formulation that computes a tight upper bound on the worst-case expected approximation error given a projection vector $\mathbf{V}$. 

Let the projected polyhedron be $\hat{\boldsymbol{P}} = \{ \hat{\mathbf{w}} | \hat{\mathbf{w}}=  \mathbf{V} \mathbf{V}^T \mathbf{w},  \mathbf{w} \in \boldsymbol{P} \}$. For a data point $\bmt{z}$ and a projection $\mathbf{V} \mathbf{V}^T \bmt{z}$, the approximation error is:
\begin{align*}
	H(\bmt{z})=h(\mathbf{V} \mathbf{V}^T \bmt{z})-h(\bmt{z})  =\max_{\hat{\mathbf{w}} \in \hat{\mathbf{P}}} \quad & \hat{\mathbf{w}}^T \bmt{z} - \max_{\mathbf{w} \in \mathbf{P}} \mathbf{w}^T \bmt{z}	\\
	 =\max_{\mathbf{w,y,s}} \quad & \mathbf{V} \mathbf{V}^T \mathbf{w} \bmt{z} -\mathbf{b}^T \mathbf{y}\\
	 \mbox{ s.t. } \quad &\mathbf{A} \mathbf{y} -\mathbf s =   \bmt{z}\\
	 & \mathbf{A}^T \mathbf{w} = \mathbf b\\
	 & \mathbf{w,s} \ge 0.
\end{align*}

Given the mean $\boldsymbol{\mu}$ and covariance matrix $\boldsymbol{\Sigma}$ of $\bmt{z}$ respectively, the tight distributionally robust bound on the mean absolute error (MAE) can be evaluated as:
\begin{align}
	Z_P=\sup_{\bmt z \sim (\bm \mu, \bm \Sigma)} \left|\mathbb{E} [H(\bmt{z})]\right|.
\end{align}

\begin{theorem} \label{thm:exact}
The distributionally robust bound on the MAE in low-rank approximation with $\mathbf{V}$ is given by $Z_P=\max\{Z_C^+,Z_C^-\}$, where $Z_C^+$ and $Z_C^-$ can be evaluated by the following convex optimization problems:
\begin{eqnarray*}
	\begin{array}{ll}
		Z_C^+=\max & \langle \mathbf{VV}^T, \mathbf Y_w\rangle-\mathbf b^T \mathbf p_y \\
		\qquad \text { s.t. } & \mathbf a_i^T \mathbf p_{w}=b_i, \quad \forall i=1,...,m \\
		& \mathbf a_i^T \mathbf X_{w} \mathbf a_{i}=b_{i}^{2}, \quad \forall i=1,...,m \\
		& \langle \mathbf A^T \mathbf A, \mathbf X_{y}\rangle=\langle \mathbf A^T, \mathbf Y_{y}+\mathbf Z_{sy}^T\rangle \\
		& \langle \mathbf A^T \mathbf A, \mathbf X_{y}\rangle=\langle \mathbf I, \mathbf X_{s}+2 \mathbf Y_{s}+\bm \Sigma\rangle\\
		& \left(\begin{array}{ccccc}
			1 & \bm \mu^{T} & \mathbf p_w^{T} & \mathbf p_y^{T} & \mathbf p_s^{T} \\
			\bm \mu & \bm \Sigma & \mathbf Y_w^{T} & \mathbf Y_y^T & \mathbf Y_s^T\\
			\mathbf p_w & \mathbf Y_w & \mathbf X_w & \mathbf Z_{yw}^T & \mathbf Z_{sw}^T\\
			\mathbf p_y & \mathbf Y_y & \mathbf Z_{yw} & \mathbf X_y & \mathbf Z_{sy}^T \\
			\mathbf p_s & \mathbf Y_s & \mathbf Z_{sw} & \mathbf Z_{sy} & \mathbf X_s \\
		\end{array}\right) \in 
  \left\{\mathbf{M} \left| \exists \begin{array}{l}
			V_{1} \in \mathbb{R}_{+}^{1 \times l} \\
			\mathbf{V}_{2} \in \mathbb{R}^{n \times l} \\
			\mathbf{V}_{3} \in \mathbb{R}_{+}^{n \times l}\\
			\mathbf{V}_{4} \in \mathbb{R}^{m \times l}\\
			\mathbf{V}_{5} \in \mathbb{R}_{+}^{n \times l}
		\end{array}\right.\right.  \text { s.t. } \left. \mathbf{M}=\left(\begin{array}{l}
			V_{1} \\
			\mathbf{V}_{2} \\
			\mathbf{V}_{3} \\
			\mathbf{V}_{4} \\
			\mathbf{V}_{5}
		\end{array}\right)\left(\begin{array}{l}
			V_{1} \\
			\mathbf{V}_{2} \\
			\mathbf{V}_{3} \\
			\mathbf{V}_{4} \\
			\mathbf{V}_{5}
		\end{array}\right)^{T}\right\},
	\end{array}\label{eq:Z_C}
\end{eqnarray*}
and 
\begin{eqnarray*}
	\begin{array}{ll}
		Z_C^-=\max & \mathbf b^T \mathbf p_y - \langle \mathbf{VV}^T, \mathbf Y_w\rangle\\
		\qquad \text { s.t. } & \mbox{Constraints in } Z_C^+.
	\end{array}
\end{eqnarray*}
\end{theorem}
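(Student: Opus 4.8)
The plan is to first peel off the absolute value and then recognize $Z_C^+$ (and, by a sign flip, $Z_C^-$) as the \emph{exact} completely positive reformulation of a moment problem. Since $|\mathbb{E}[H(\bmt{z})]| = \max\{\mathbb{E}[H(\bmt{z})],\,-\mathbb{E}[H(\bmt{z})]\}$ and the supremum over distributions distributes over the max, I would write $Z_P = \max\{Z^+,Z^-\}$ with $Z^+ = \sup_{\bmt{z}\sim(\bm\mu,\bm\Sigma)} \mathbb{E}[H(\bmt{z})]$ and $Z^- = \sup_{\bmt{z}\sim(\bm\mu,\bm\Sigma)} \mathbb{E}[-H(\bmt{z})]$. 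It then suffices to prove $Z^+ = Z_C^+$, since $Z^- = Z_C^-$ follows verbatim: negating the inner objective only flips the sign of $\langle\mathbf{VV}^T,\mathbf{Y}_w\rangle - \mathbf{b}^T\mathbf{p}_y$ while leaving every constraint untouched. Complete recourse guarantees that the LP defining $H(\bmt{z})$ is feasible and bounded for every $\bmt{z}$, so $H$ is a finite piecewise-linear function of at most linear growth and all expectations and suprema are well defined.

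Next I would lift the moment problem to the joint law of $(\bmt{z},\mathbf{w},\mathbf{y},\mathbf{s})$. Using the LP representation of $H$, for any feasible distribution $P$ of $\bmt{z}$ I attach, by measurable selection, an optimal triple $(\mathbf{w}^\ast,\mathbf{y}^\ast,\mathbf{s}^\ast)$ to each realization, producing a joint law $Q$ whose objective $\mathbb{E}_Q[\mathbf{w}^T\mathbf{VV}^T\bmt{z}-\mathbf{b}^T\mathbf{y}]$ equals $\mathbb{E}_P[H(\bmt{z})]$ and whose $\bmt{z}$-marginal still has moments $(\bm\mu,\bm\Sigma)$. Conversely, any joint law supported on $\{\mathbf{A}^T\mathbf{w}=\mathbf{b},\ \mathbf{A}\mathbf{y}-\mathbf{s}=\bmt{z},\ \mathbf{w},\mathbf{s}\ge 0\}$ with the correct $\bmt{z}$-moments yields, through its marginal, a feasible $P$ with $\mathbb{E}_P[H(\bmt{z})]\ge \mathbb{E}_Q[\mathbf{w}^T\mathbf{VV}^T\bmt{z}-\mathbf{b}^T\mathbf{y}]$, since the attached triple is feasible but need not be optimal. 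Hence $Z^+$ equals the supremum of $\mathbb{E}_Q[\mathbf{w}^T\mathbf{VV}^T\bmt{z}-\mathbf{b}^T\mathbf{y}]$ over all such joint laws.

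I would then pass from distributions to the second-moment matrix $\mathbf{M}=\mathbb{E}_Q[\bm\xi\bm\xi^T]$ of $\bm\xi=(1,\bmt{z},\mathbf{w},\mathbf{y},\mathbf{s})$, whose block pattern is exactly the matrix displayed in the theorem (so $\mathbf{Y}_w=\mathbb{E}[\mathbf{w}\bmt{z}^T]$, $\mathbf{X}_w=\mathbb{E}[\mathbf{w}\mathbf{w}^T]$, and so on). The objective is linear in $\mathbf{M}$, equal to $\langle\mathbf{VV}^T,\mathbf{Y}_w\rangle-\mathbf{b}^T\mathbf{p}_y$. The almost-sure equalities become linear constraints on $\mathbf{M}$: taking expectations of $\mathbf{A}^T\mathbf{w}=\mathbf{b}$ gives $\mathbf{a}_i^T\mathbf{p}_w=b_i$, and since a random variable equal to a constant a.s. has that constant as mean and zero variance, $\mathbf{a}_i^T\mathbf{w}=b_i$ also forces $\mathbf{a}_i^T\mathbf{X}_w\mathbf{a}_i=b_i^2$; contracting $\mathbf{A}\mathbf{y}-\mathbf{s}=\bmt{z}$ with $\mathbf{A}\mathbf{y}$ and with itself (written $\bmt{z}+\mathbf{s}$) and taking traces yields the two inner-product equalities $\langle\mathbf{A}^T\mathbf{A},\mathbf{X}_y\rangle=\langle\mathbf{A}^T,\mathbf{Y}_y+\mathbf{Z}_{sy}^T\rangle=\langle\mathbf{I},\mathbf{X}_s+2\mathbf{Y}_s+\bm\Sigma\rangle$. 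Finally, the sign constraints $\mathbf{w},\mathbf{s}\ge 0$ (with the degenerate coordinate $1\ge 0$) make $\mathbf{M}$ a nonnegative mixture of rank-one terms $\bm\xi\bm\xi^T$ with the sign pattern $(+,\text{free},+,\text{free},+)$, i.e., $\mathbf{M}$ lies in the stated completely positive cone. This yields $Z^+\le Z_C^+$.

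The crux, and the step I expect to be the main obstacle, is exactness: showing that every $\mathbf{M}$ feasible for the $Z_C^+$ program is realized by a genuine distribution, so that $Z_C^+\le Z^+$. Here I would invoke the completely positive moment result underlying Theorem 2 of \cite{natarajan2017reduced}: a matrix in the cone admits a finite decomposition $\mathbf{M}=\sum_l \bm\xi^{(l)}(\bm\xi^{(l)})^T$ respecting the sign pattern, which, after normalizing by the squared first coordinate, is read as a discrete distribution placing the appropriate mass on each atom $\bm\xi^{(l)}$. The delicate point is that the squared (second-moment) equalities force each atom carrying positive mass to satisfy $\mathbf{A}^T\mathbf{w}=\mathbf{b}$ and $\mathbf{A}\mathbf{y}-\mathbf{s}=\bmt{z}$ \emph{exactly}, not merely in expectation — this zero-variance collapse of the support onto the feasible set is precisely what makes the constructed distribution feasible for the lifted problem while attaining objective $\langle\mathbf{VV}^T,\mathbf{Y}_w\rangle-\mathbf{b}^T\mathbf{p}_y$. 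Combining the two inequalities gives $Z^+=Z_C^+$, and the sign-flipped argument gives $Z^-=Z_C^-$, whence $Z_P=\max\{Z_C^+,Z_C^-\}$. I would also remark that this conic reformulation, although exact, is a completely positive program and hence not polynomially tractable in general, which is consistent with its role as the tight benchmark against which the earlier semidefinite relaxation is measured.
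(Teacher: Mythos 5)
Your overall strategy is the same as the paper's: split $Z_P$ into $Z_P^+=\sup\mathbb{E}[H(\bmt{z})]$ and $Z_P^-=\sup\mathbb{E}[-H(\bmt{z})]$, prove $Z_C^+\ge Z_P^+$ by lifting an optimally selected joint law of $(\bmt{z},\mathbf{w},\mathbf{y},\mathbf{s})$ to its second-moment matrix (the paper's Lemma 1), and prove the converse by factoring a feasible $\mathbf{M}$ into rank-one terms and reading off a distribution whose atoms are forced onto the feasible set by the second-moment equalities (the paper's Lemmas 2 and 3). The relaxation direction and the zero-variance/Cauchy--Schwarz collapse are correctly identified. (A small point: existence of the factorization is immediate from the definition of the cone in the theorem statement; no external result is needed for that step.)

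The genuine gap is in your exactness step. You normalize ``by the squared first coordinate'' and keep only ``atoms carrying positive mass,'' which silently discards the rank-one terms whose first coordinate is zero. Writing the factorization as $\mathbf{M}=\sum_{l}(\alpha_l,\bm{\beta}_l,\bm{\gamma}_l^w,\bm{\gamma}_l^y,\bm{\gamma}_l^s)(\cdots)^T$ and letting $L_0=\{l:\alpha_l=0\}$, these terms cannot be turned into probability atoms, yet they still contribute to the second-moment block: $\bm{\Sigma}=\sum_{l}\bm{\beta}_l\bm{\beta}_l^T$ \emph{including} the $L_0$ terms, so the discrete law supported on the normalized positive-mass atoms has second moment $\sum_{l\notin L_0}\bm{\beta}_l\bm{\beta}_l^T\neq\bm{\Sigma}$ in general and is therefore not feasible for $Z_P^+$. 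Consequently your claim that every feasible $\mathbf{M}$ ``is realized by a genuine distribution'' is false as stated; the value $Z_C^+$ is in general attained only asymptotically. The paper closes exactly this hole with three ingredients you omit: (i) a Cauchy--Schwarz equality argument showing $\mathbf{A}^T\bm{\gamma}_l^w=\mathbf{0}$ for $l\in L_0$, which---under the additional assumption that $\boldsymbol{P}=\{\mathbf{w}\ge 0\,|\,\mathbf{A}^T\mathbf{w}=\mathbf{b}\}$ is bounded and nonempty, an assumption your argument never invokes---forces $\bm{\gamma}_l^w=\mathbf{0}$, so that an arbitrary feasible $\mathbf{w}^\dagger\in\boldsymbol{P}$ can be attached to these atoms without affecting the objective; (ii) an $\epsilon$-indexed family of distributions placing mass $\epsilon^2/|L_0|$ on the dilated atoms $\sqrt{|L_0|}\bm{\beta}_l/\epsilon$ (which remain feasible because Lemma 2 gives $\mathbf{A}\bm{\gamma}_l^y-\bm{\gamma}_l^s=\bm{\beta}_l$ for \emph{all} $l$, including $L_0$), preserving their second-moment contribution while their contribution to the mean vanishes as $\epsilon\downarrow 0$; and (iii) a limiting argument in the style of Natarajan et al.\ showing $Z_P^+\ge\lim_{\epsilon\downarrow 0}\mathbb{E}[H(\bmt{z}_\epsilon)]=Z_C^+$ even though no single feasible distribution need attain the bound. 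Without this treatment of the $L_0$ terms, your argument establishes $Z_C^+\le Z_P^+$ only in the special case where the factorization happens to have all first coordinates strictly positive.
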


\section{Computational Experiments with Synthetic Data}
We first apply the PPCA approach to a stochastic programming problem with a set of synthetic, simulated data. This allows for the evaluation of the effectiveness of our proposed approach under a controlled setting. In the next section, we further illustrate its application based on real-life data.

\subsection{Problem Setting}\label{sec: simu_setting}
We consider a joint production and inventory allocation problem with transshipment in a network consisting of a set of demand nodes $I$, and a set of production nodes $I'$. Each production node $i \in I'$ has a production capacity of $S_i$, and each demand node $j \in I$ faces stochastic demand $\tilde{z}_j$ (where $\bmt{z} \in \mathbb{R}^{|I|}$ denotes the vector of demand).  The problem consists of two stages. In the first stage, the firm determines how much to produce in each site $i \in I'$ (with unit production cost $f_i$), as well as the shipment quantity $x_{ij}$ from each $i \in I'$ to each demand location $j \in I$ (with unit shipment cost $c_{ij}$. Then, in the second stage, demand is realized, and the firm can fulfill demand with on-hand inventory with possible transshipments: in particular, it determines the transshipment quantity $y_{ij}$ for demand nodes $i, j \in I$ (at unit transshipment cost $\bar c_{ij}$). Unmet demand at $i \in I$, denoted by $w_i$, will be penalized with a unit shortage cost of $p_i$. This problem can be formulated as a two-stage stochastic program as below:
\begin{eqnarray}
	\min_{\mathbf{X} \ge 0} &&  \sum_{i\in I',j\in I} (f_i+c_{ij})x_{ij}+ \mathbb E [h(\mathbf{X,\tilde z})] \label{eq:experiment} \\
	\mbox{s.t.} && \sum_{j\in I} x_{ij} \le S_i, \forall i\in I', \nonumber
\end{eqnarray}
\noindent where $\mathbf{X} = (x_{ij}, \forall i \in I', j \in I)$ and $h(\cdot)$ denotes the second stage cost, given by:
\begin{align*}
	h(\mathbf{X, z}) = &\min_{\mathbf{Y,w}\ge 0} \sum_{i\in I,j\in I} \bar c_{ij}y_{ij} + \sum_{i \in I} w_i p_i \\
	\mbox{s.t.} \quad & -\sum_{j\in I} y_{ij}+\sum_{j\in I} y_{ji}+w_i \ge  z_i-\sum_{j\in I'} x_{ji}, \forall i \in I.
\end{align*}
Here, $ z_i$ is the realized demand at node $i$ and $\mathbf p=(p_i,\forall i\in I)$ is the vector of penalty costs. Note that this problem has complete recourse. 

\subsection{Synthetic Data Generation}
We generate a synthetic data set based on the 49-node problem instance for facility location problems from \cite{daskin1995network}, where the 49 demand nodes (set $I$) are the 48 continental U.S. state capitals and Washington, DC., and the shipping costs ($c_{ij}$ and $\bar c_{ij}$) are proportional to the great circle distances between any pair of locations $i$ and $j$. We generate stochastic demand at node $i$ as follows based on the notion of primitive uncertainties \cite{chen2008linear}:
\[\tilde z_i=\phi_i(\xi_{i1}\tilde \zeta_1+\xi_{i2}\tilde \zeta_2+\cdots+\xi_{iK}\tilde \zeta_K)^+,\]
where $(\zeta_1,\cdots,\zeta_K)$ are the primitive uncertainties following some joint distribution (e.g., independent Gaussian distributions), $\xi_{ik}$ denotes fixed coefficients sampled from Uniform($-0.8,1$), and $\psi_i> 0$ is a scaling vector proportional to the corresponding demand nodes $i$' population. Moreover, the operator $\times$ denotes elementwise multiplication and $(\cdot)^+$ denotes $\max(0,\cdot)$. In our experiments, we set $K=25$ and $\mathcal F$ to be componentwise independent, which implies that the covariance matrix of $\bmt{z}$ has a rank up to 25. We then sample 100 and 1000 observations as the training and test sets in each experiment instance. 

Furthermore, to evaluate the potential impact of data perturbation or contamination common in real-life applications, we run a set of experiments where the training data includes a random noise $\varepsilon_i$ following a Gaussian distribution. In this case, the training data is generated as follows:
\[\tilde z_i=\phi_i(\xi_{i1}\tilde \zeta_1+\xi_{i2}\tilde \zeta_2+\cdots+\xi_{iK}\tilde \zeta_K+\varepsilon_i)^+.\]

We also randomly select five of the 49 nodes as the production sites (set $I'$), each with production capacity $S_i$ set to $40\%$ of the sum of the mean demand over the 49 nodes.  The production cost $f_i$ at site $i \in I'$ is sampled from the Uniform($10,20$) distribution. The shipping cost in the first stage $c_{ij}=0.015 \times$ the distance from node $i$ to $j$; and the transshipment cost in the second stage is set to $\bar c_{ij}=0.02\times$ the distance from node $i$ to $j$. Finally, the penalty cost per unit of lost sales is $p_i=100$ for all $i\in I$.

\subsection{Performance Evaluation}

We first obtain a low-dimensional representation by solving the (regularized) PPCA problem with Algorithm \ref{alg:alternating}. This yields a low-rank covariance matrix $\boldsymbol{\Sigma}_1$ that approximates $\boldsymbol{\Sigma}_0$, estimated from the training sample after centering the data. By re-solving the problem with varying weights $\rho$ on the nuclear norm regularization term, we obtain $\boldsymbol{\Sigma}_1$ with different ranks (values of $k$). For each $\Sigma_1$, we can recover the associated low-dimensional data projection $\mathbf{V}$ via eigenvalue decomposition.

Given $\mathbf{V}$, the $\mathbf{V} \mathbf{V}^T \bmt{z}$ gives the projection of the $n$-dimensional random vector $\bmt{z}$ onto a $k$-dimensional subspace of $\mathbb{R}_n$. We test the performance of approximating the stochastic program based on the alternative low-dimensional projections identified with PPCA (our proposed approach) and PCA (as a benchmark) with the same $k$. In particular, we obtain first-stage production and shipping decisions ($\mathbf{X}$) by solving the LDR-based approximation for stochastic programs discussed in \cite{chen2008linear}. Importantly, projecting the demand vector onto a $k$-dimensional subspace implies modeling the $n$-dimensional demand vector with $k$ features, i.e., the (prescriptive) principal components. Following the LDR approach in \cite{chen2008linear}, we restrict the recourse decisions (transshipments, $\mathbf{y}$) to be affine functions of the principal components (i.e., with $k+1$ degrees of freedom), instead of the original demand ($n+1$ degrees of freedom).  Thus, dimensionality reduction effectively reduces the complexity of the LDR formulation. For example, as illustrated in Figure~\ref{fig: solve_time}, the computational times can be reduced by as much as 98.7\% when the LDR is defined based on two-dimensional primitive uncertainties (identified with PPCA) than the original 49-dimensional demand vector. 

\begin{figure}[htbp]
\begin{center}
\centerline{\includegraphics[width=0.6\columnwidth]{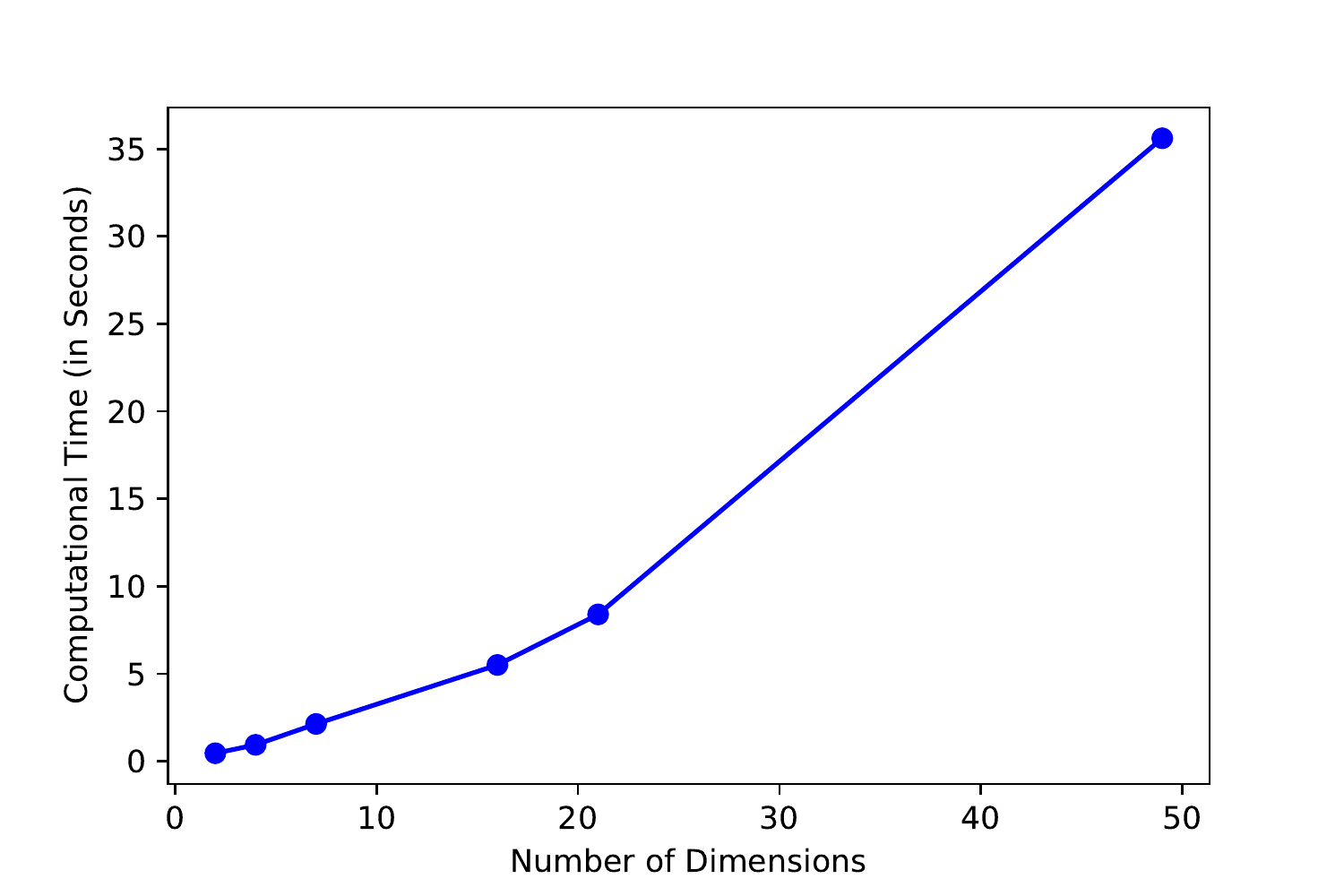}}
\caption{Computational time for solving stochastic program using LDR against the number of dimensions ($k$) when $\zeta_i\sim$Normal(2,1).} \label{fig: solve_time}
\end{center}
\vskip -0.2in
\end{figure}

To evaluate the performances of the low-rank projections, we evaluate the first-stage decisions  obtained by solving the resulting LDR formulations \cite{chen2008linear}, via sample average approximation (SAA) approach with the test data. For each solution, we compute the optimality gap, i.e., the relative difference from the ``true'' optimal cost assuming knowledge of the test data (evaluated via SAA).
We first report the experiment where the primitive uncertainties $\tilde \zeta_k$'s are independent and identically distributed following univariate normal distributions, and the training data is sampled without noise. Figure~\ref{fig:normal} shows the optimality gaps using PPCA and PCA at varying values of $k$. We find that PPCA is very effective in identifying low-dimensional representations for the stochastic program. In particular, it allows for projecting the 49-dimensional demand data onto subspaces of below 5 dimensions ($k<5$), while maintaining very small (2-3\%) optimality gaps in all cases. For regular PCA, projecting demand data onto $k=5$ or below can lead to poor performance (e.g., optimality gap exceeding 20\%). The performance of PCA is only able to match PPCA when $k$ is sufficiently large ($k\ge 10$), i.e., leading to less efficient representations of the data. 
\begin{figure}[htbp]
\begin{center}
\centerline{\includegraphics[width=0.6\columnwidth]{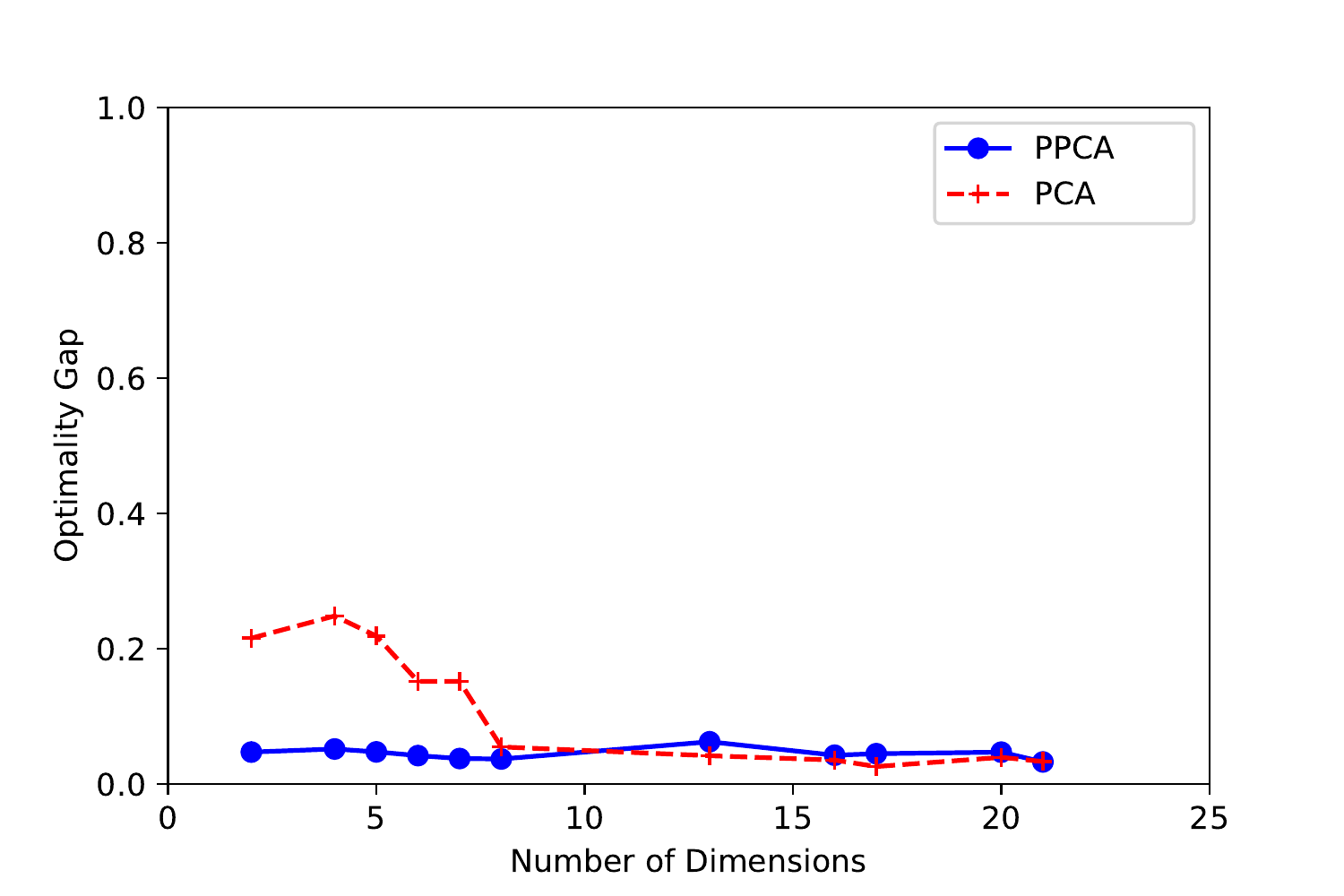}}
\caption{$\zeta_i  \sim$Normal(2,1) without noise.}\label{fig:normal}
\end{center}
\vskip -0.2in
\end{figure}

A prime motivation for dimensionality reduction is to filter out noises in training data. In practice, the training samples for many stochastic optimization problems may be subject to noises, e.g., from data collection or demand censoring. To test the effectiveness of PPCA under noisy data, we consider the noise terms $\varepsilon_i$'s to be drawn independently from a univariate Normal distribution with zero mean and a standard deviation that is proportional to the mean demand. We repeat the computational experiment given this set of noisy training data and compare the performances of PPCA and regular PCA. Compared with Figure~\ref{fig:normal}, in Figure~\ref{fig:normal_noise}, we see that the performance of prescriptive PCA is robust with respect to noise; however, PCA performs very poorly under noisy training data, even at high values of $k$. Furthermore, the performance of low-dimensional representation deteriorates more significantly when the variances of random demands are smaller. It is because, as the true variances of demands decrease, a larger proportion of the variation in the training data comes from noise. Importantly, we find that the performance of PCA deteriorates substantially more than PPCA, showcasing the robustness of PPCA under noisy data. Unlike the case of noise-free data (Figure \ref{fig:normal}), PCA no longer achieves similarly low optimality gaps than does PPCA unless $k$ is very high ($k \ge 20$; recall that the rank of the true demand distribution is 25). 
\begin{figure}[htbp]
\begin{center}
\centerline{\includegraphics[width=0.6\columnwidth]{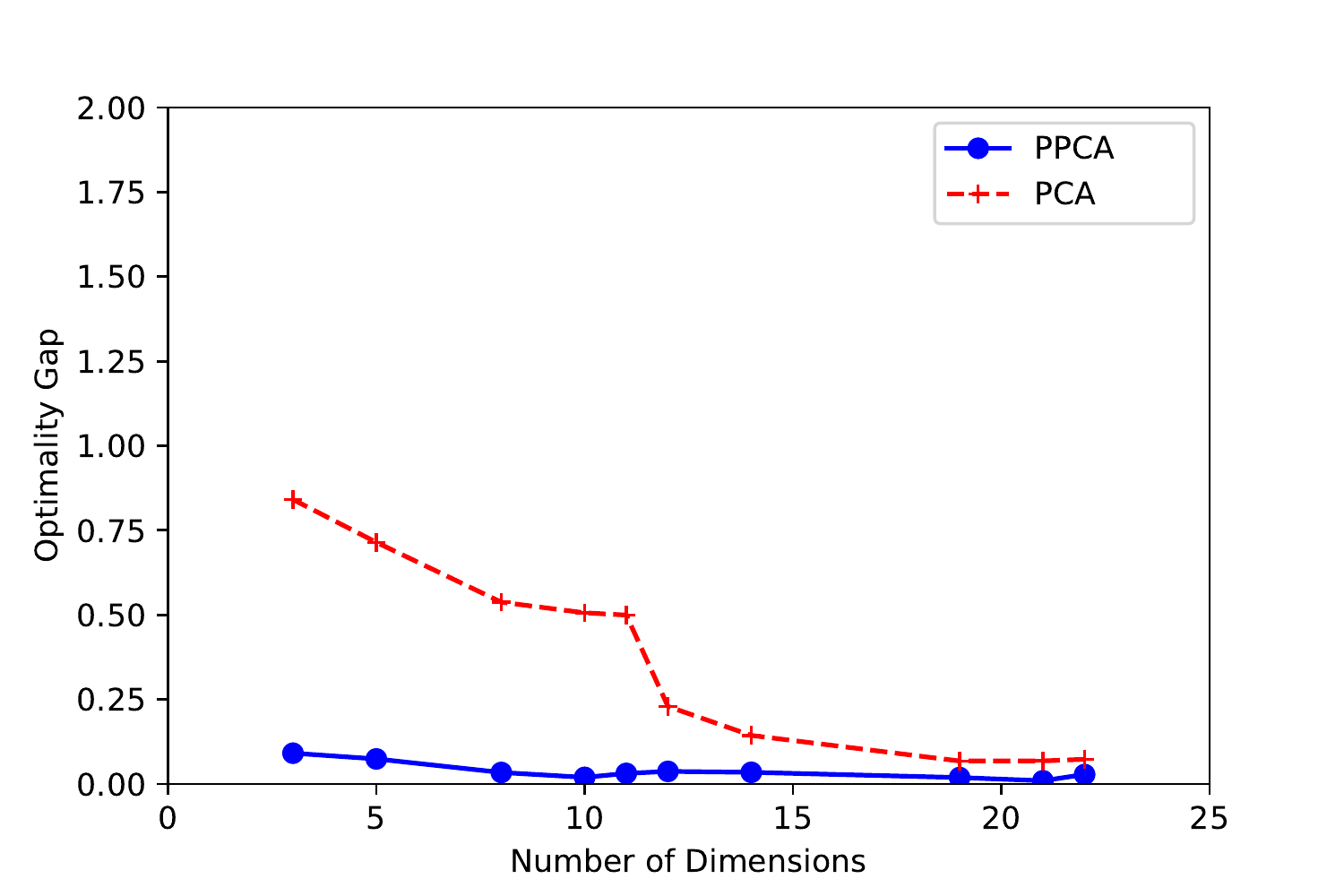}}
\caption{$\zeta_i \sim$Normal(2,1) with noise.}\label{fig:normal_noise}
\end{center}
\vskip -0.2in
\end{figure}

\section{Case Study: NYC Taxi Pre-Allocation}
Having illustrated the effectiveness of PPCA with a synthetic data set, we further examine its performance with the New York City taxi data set \cite{NYCTaxi}. We consider a mobility platform, e.g., a ridesharing platform with autonomous vehicles, serving the 59 taxi zones on Manhattan island and assume that the platform faces passenger demand as recorded in the taxi trip data. To ensure high availability of cars in close vicinity of passengers, the platform needs to reposition idle cars to prospective demand (pick-up) locations ahead of demand realization especially during the morning or afternoon peak hours when demand is highest. 

Following \cite{hao2020robust} that considers the vehicle pre-allocation problem for a single bottleneck demand period, we formulate this problem as a two-stage stochastic program: In the first stage (before the peak hour), the platform reposition idle cars at certain (location-dependent) costs; Then in the second stage, peak-hour trip demand is realized, and the platform has to match realized passenger demand with the available cars across the city, to minimize travel distances for cars or waiting times for passengers. Considering a risk-neutral objective, this problem can be expressed as a special case of problem \eqref{eq:experiment} in Section~\ref{sec: simu_setting} by re-interpreting the notation as follows. The set $I = I'$ is the set of taxi zones, where each zone $i$ is endowed with $S_i$ vehicles at the beginning of the first stage and faces random demand $\tilde{z}_i$ to be realized in the second stage. The first-stage decisions involve repositioning $x_{ij}$ vehicles between each pair of zones $i, j \in I$, at a unit cost $c_{ij}$ per vehicle. For each unit of realized demand at zone $j \in I$ in the second stage, a matching cost (e.g., customer waiting cost) of $c_{ij}$ is incurred if it is met by a vehicle positioned at zone $i$, and a penalty cost $w_j$ is incurred if it is unmet. Unlike the inventory transshipment setting in Section 6, there is no production cost (i.e., $f_i$ = 0).

We use the trip records of Yellow Taxis in Manhattan from 8:00am -- 8:59am daily between June to August 2020 provided by \cite{NYCTaxi}. 
For each day, we count the number of trips in each taxi zone, resulting in 92 observations of a 59-dimensional random demand vector. Moreover, the costs of repositioning cars (in the first stage) and traveling to meet customer demand (in the second) between zones are proportional to the average trip fare between those zones observed in the data. The penalty cost for unmet demand in each zone is estimated based on the average fare for all trips originating from that zone. Finally, we assume the initial total supply of cars to be equal to the mean demand, and randomly distributed in 10 randomly selected zones. We then evenly split the 92 observations of trip demand into training and test sets. 

\begin{figure}[htbp]
	\centering
	\includegraphics[width=0.6\textwidth]{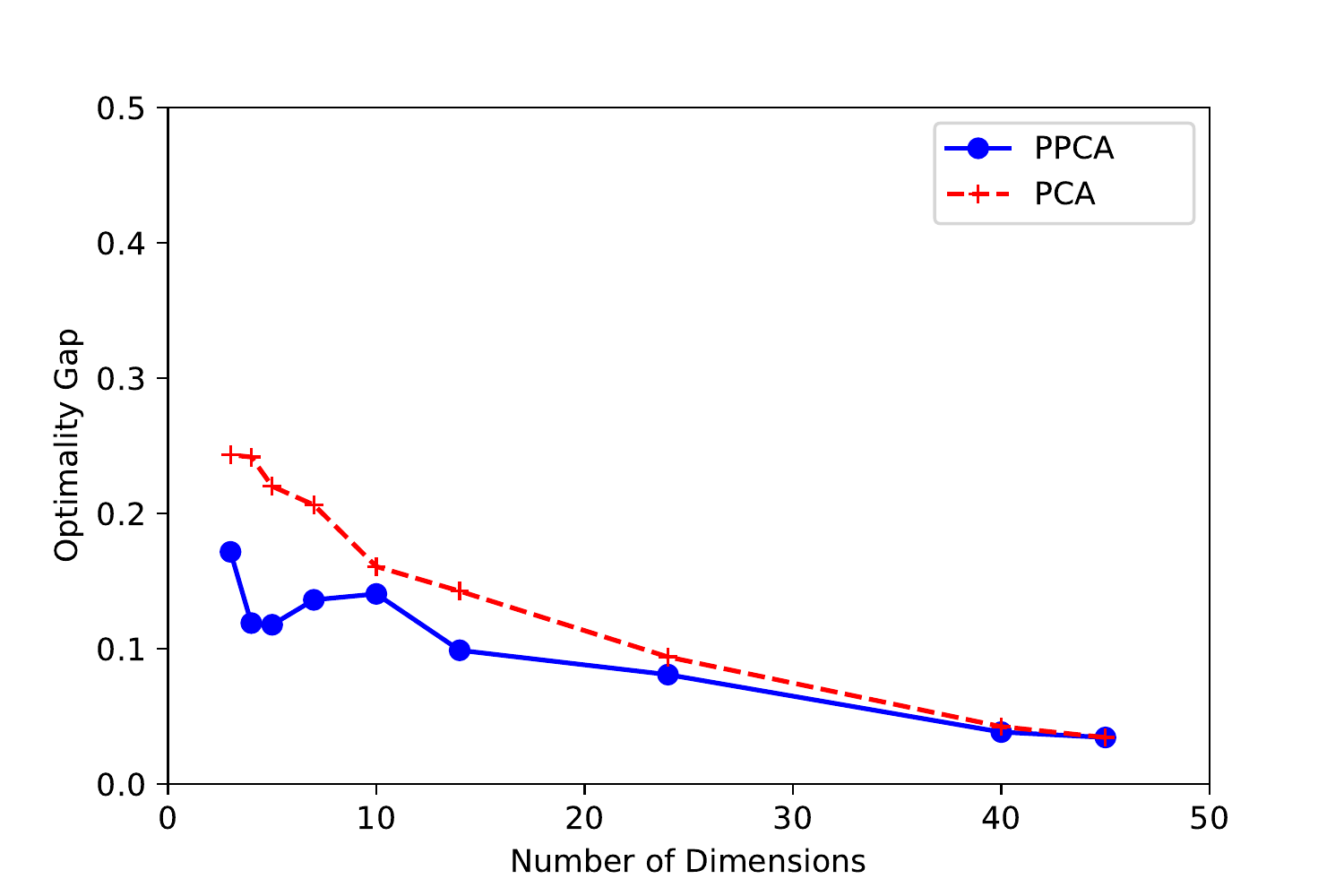}
	\caption{The optimality gaps using PCA and PPCA for NYC Taxi Pre-allocation.}\label{fig:nyc_gaps}
\end{figure}

Following similar procedures as in the previous section, we solve for the first-stage vehicle repositioning decisions in the low-dimensional subspace identified by PPCA and PCA respectively based on the training data. We then evaluate the out-of-sample performance of the solution with test data. The optimality gaps are shown in Figure~\ref{fig:nyc_gaps}. 
First, we observe that both PPCA and regular PCA achieve lower optimality gaps as $k$ increases, as expected. In addition, we can see that the optimality gaps tend to be higher than observed for synthetic data (without noise) across the range of $k$. This is inevitable, as the assumption that training and test data are identical and independently distributed only holds approximately in real-life data. Thus, the case with real-life data is more comparable with the case of synthetic data with (some) noise in the training data. 
 
Similar to the case with synthetic data, we find that the performance of PPCA dominates that of regular PCA: the former is able to achieve lower optimality gaps (better solution quality) with equal or lower values of $k$ (i.e., the computational burden in solving the stochastic program). For example, the performances of PPCA with $k < 10$ is similar to that of PCA with $k \approx 20$. This reaffirms the insight that PPCA projects the data along dimensions that retains more relevant information with respect to the stochastic program than does regular PCA. 

Moreover, we visualize the top two principal components (PCs) identified by PPCA and regular PCA in Figure \ref{fig:Top_PCs}. The first PCs, as identified by both methods, are almost identical. The correlation between the two sets of loadings is over 0.99. This indicates that both methods are consistent in finding the first dominant factor of variation in the data. However, the second PCs differ significantly. From PCA, the second PC highlights an axis of strong demand variation, where zones 42 (Upper Manhattan/ Harlem) and 43 (Central Park) exhibit clear negative correlation and all other zones carry relatively uniform weights. From PPCA, however, zone 43 not only exhibits a clear negative correlation with zone 42, but also a cluster of zones in Midtown. In our stochastic program, such a factor highlights the importance of moving vehicles between the Central Park and Midtown areas in the recourse problem. While this pattern does not necessarily carry the largest variation as opposed to the PCA solution, it contains relevant information for the optimization problem that the PCA does not capture. 

\begin{figure}[htbp]
	\begin{subfigure}{0.5\linewidth}
		\centering
		\includegraphics[width=\linewidth]{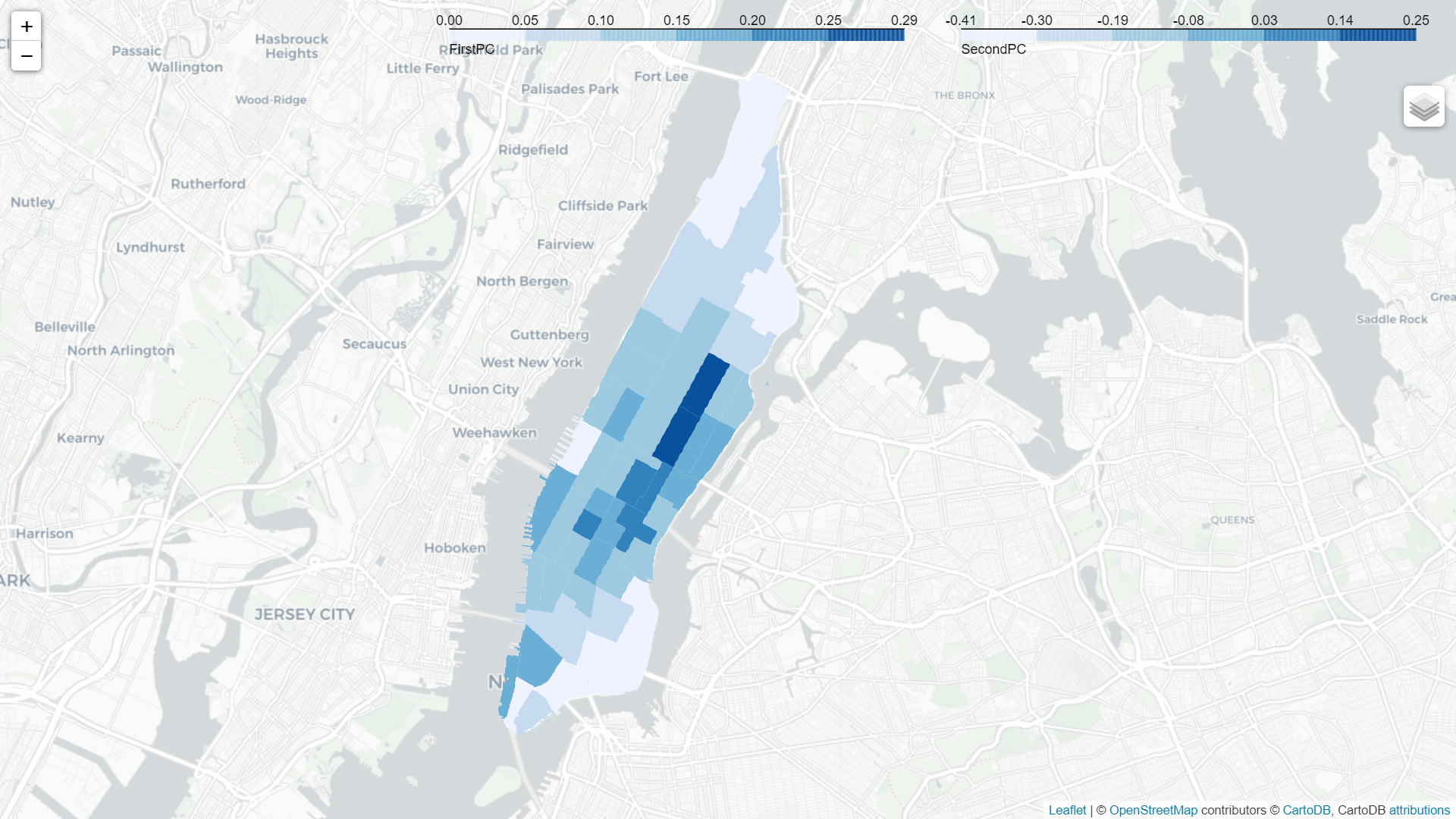}
		\caption{First PC (PPCA).}
	\end{subfigure}
        \hspace{0.2in}
	\begin{subfigure}{0.5\linewidth}
		\centering
		\includegraphics[width=\linewidth]{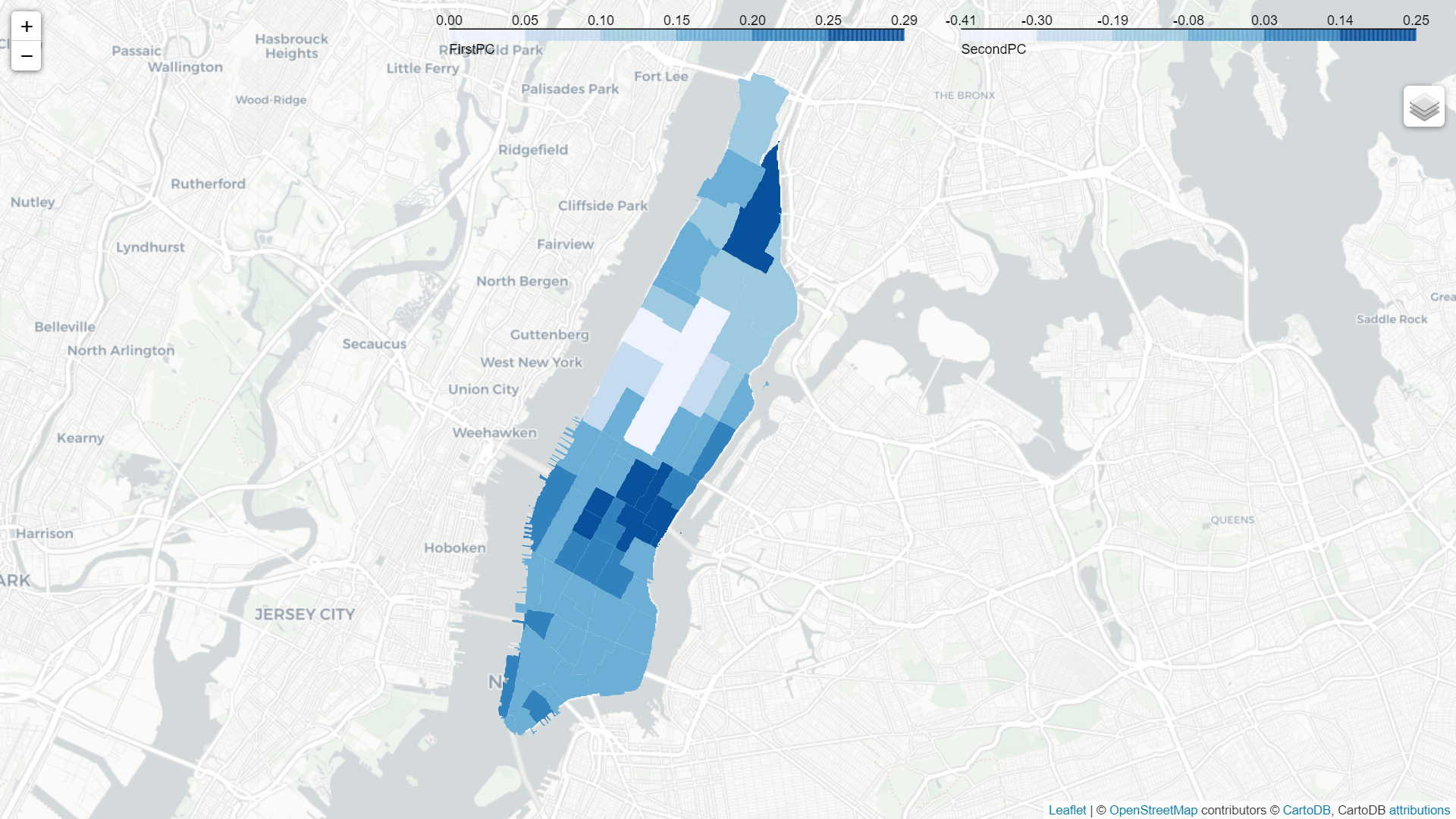}
		\caption{Second PC (PPCA)}
	\end{subfigure}
		\begin{subfigure}{0.5\linewidth}
		\centering
		\includegraphics[width=\linewidth]{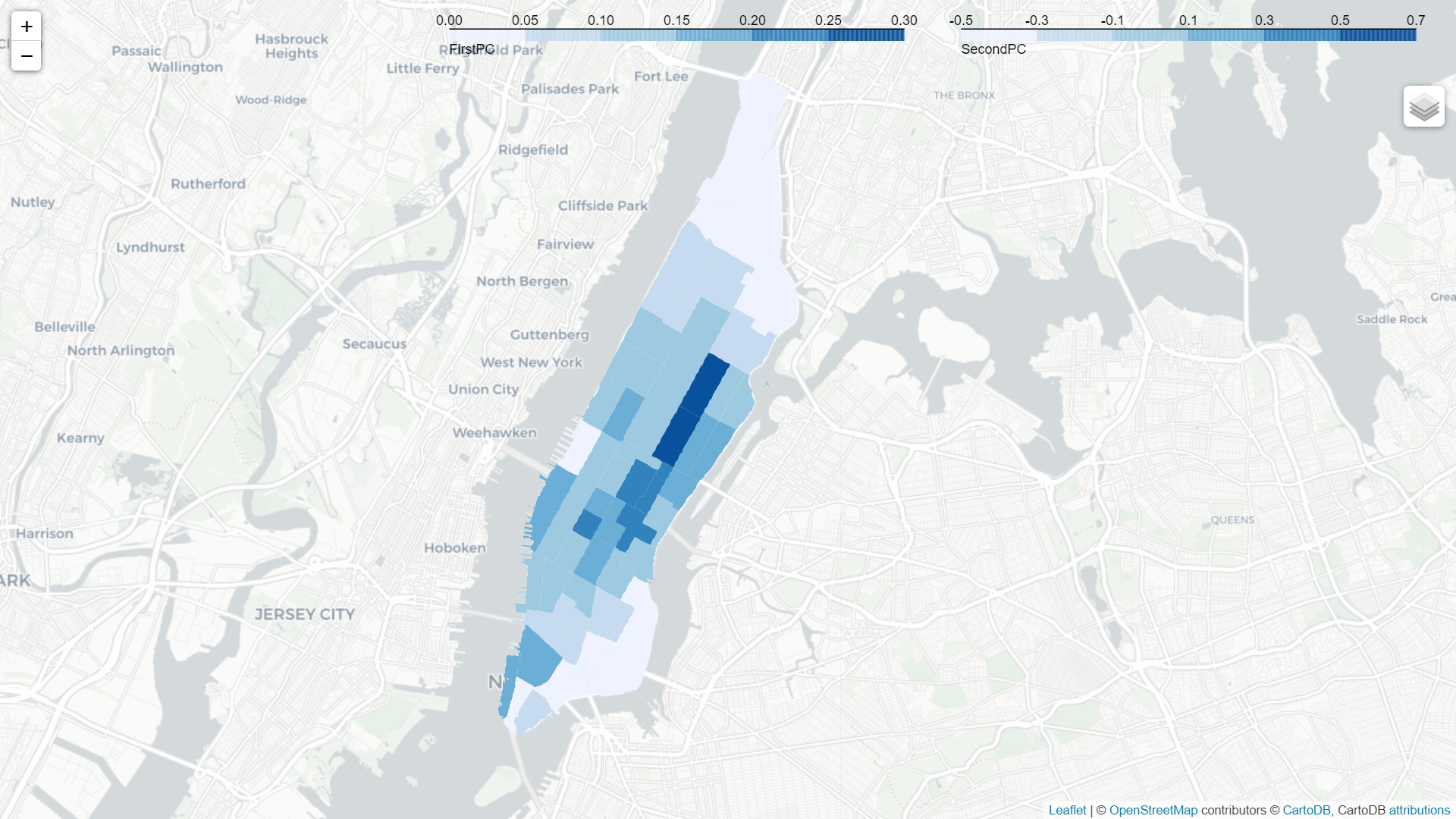}
		\caption{First PC (PCA)}
	\end{subfigure}
        \hspace{0.2in}
	\begin{subfigure}{0.5\linewidth}
		\centering
		\includegraphics[width=\linewidth]{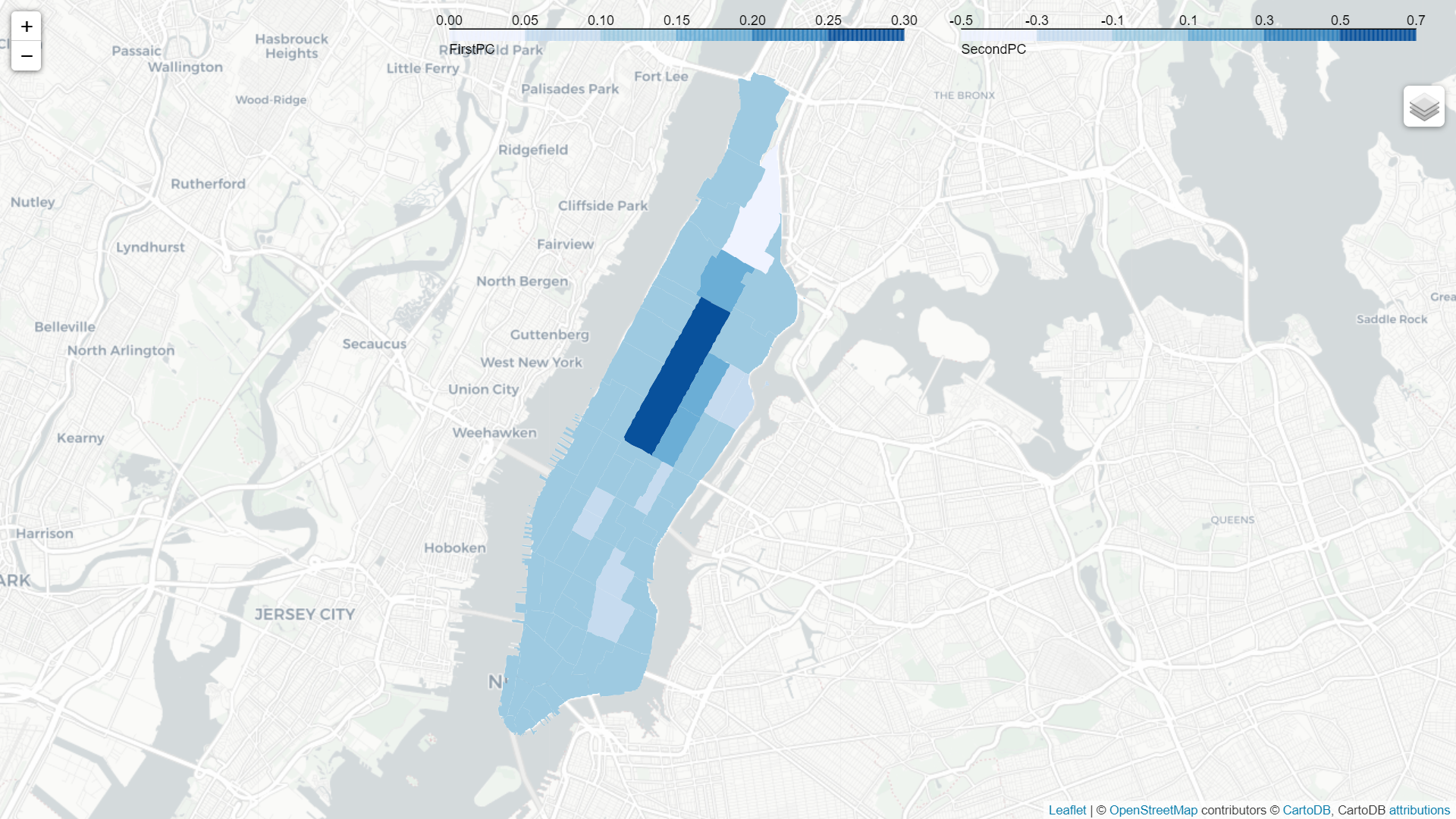}
		\caption{Second PC (PCA)}
	\end{subfigure}
	\caption{The top two principal components (PCs) from PPCA and PCA. }\label{fig:Top_PCs}
\end{figure}

\section{Conclusion} 
As the integration of data science and machine learning tools with mathematical programming becomes more prevalent in practice, there is a stronger need for the development of methods to align these tools. Our paper contributes to this growing literature by proposing a prescriptive dimensionality reduction method that uncovers a low-dimensional linear projection of data preserving optimality in the sense of solving a downstream stochastic program, as opposed to preserving variation in the data as in standard approaches. 

Our work can be extended in several directions. First, while our analysis has focused on rank reduction of the covariance matrix, i.e., the case of PCA, a similar approach can be adopted for different dimensionality reduction settings such as sparse PCA and factor analysis. This would involve imposing the applicable structural constraint in the optimization problem \eqref{eq:min-max}, and the main challenge would lie in developing reformulations and/or approximations amenable to computationally-efficient solution algorithms.

Second, while our focus was on the unsupervised learning context of linear dimensionality reduction in this paper, it is possible to follow a similar approach for supervised learning tasks. A direct extension of our approach could be applied to the case of low-rank linear regression. Specifically, instead of seeking a low-rank projection of given data points, one would seek a low-rank regression model that predicts the dependent variable to be used in a downstream optimization problem given input features. It would be interesting to explore similar directions with other machine learning methods, such as ensemble methods (e.g., random forests).


\bibliographystyle{unsrt}  
\bibliography{dim_red}

\newpage
\appendix

\section{Proofs }

\subsection{Proof of Proposition \ref{prop:PPCA_SDP}}
We can rewrite the inner maximization problem as follows (with dual variables associated with each constraint in parentheses):
\begin{equation*}
	\begin{array}{r@{}l}
		\bar{h}(\mathbf{0}, \mathbf{\Sigma}) =   
		\max_{u, \mathbf{v}, \mathbf{p},\mathbf Y, \mathbf X} \quad &tr(\mathbf Y)\\
		\text{s.t.} \quad & \mathbf{a}_i^T \mathbf{p}=b_i, \quad \mbox{ for } i=1, \cdots, m \quad (\alpha_i) \\
		&\mathbf{a}_i^T \mathbf X \mathbf{a}_i = b_i^2, \quad \mbox{ for } i=1, \cdots, m \quad (\beta_i) \\
		& u = 1 \quad (\nu) \\
		& v_i = 0, \quad \mbox{ for } i=1, \cdots, n \quad (\gamma_i) \\
		& S_{ij} = \Sigma_{ij}, \quad \mbox{ for } i,j=1, \cdots, n \quad (\lambda_{ij}) \\
		& p_i \ge 0, \quad \mbox{ for } i=1, \cdots, n \quad (\rho_i) \\
		&\begin{pmatrix} 
			u &   \boldsymbol{v}^T & \mathbf p^T\\
			\boldsymbol{v} & \mathbf{S}  & \mathbf Y^T \\
			\mathbf p & \mathbf Y & \mathbf X
		\end{pmatrix} \succeq 0 \quad (\mathbf{G}).
	\end{array}
\end{equation*}

The strict feasibility condition implies strong duality holds by Slater’s theorem. With the covariance matrix $\mathbf{\Sigma}_e$, the Lagrangian $L( \boldsymbol \alpha, \boldsymbol \beta, \nu, \boldsymbol \gamma, \boldsymbol \Lambda, \boldsymbol \rho, \mathbf{G}, u, \mathbf{v}, \mathbf{p},\mathbf Y, \mathbf X)$ is given by
\begin{eqnarray*}
 && \langle \mathbf{I}, \mathbf Y \rangle + \sum_{i=1}^m \alpha_i (b_i - \mathbf{a}_i^T \mathbf{p}) + \sum_{i=1}^m \beta_i (b_i^2 - \mathbf{a}_i^T \mathbf{X} \mathbf{a}_i) \\
	&& \qquad + \nu (1 - u) + \sum_{i=1}^n \gamma_i ( - v_i) +\langle \mathbf{\Lambda}, \mathbf{\Sigma}_e - \mathbf{S} \rangle + \sum_{i=1}^n \rho_i p_i \\
	&& \qquad +\left\langle 
	\begin{pmatrix} 
		g_1 &   \boldsymbol{g}_2^T & \mathbf g_3^T\\
		\boldsymbol{g}_2 & \mathbf{G}_{22}  & \mathbf G_{32}^T \\
		\mathbf g_3 & \mathbf G_{32} & \mathbf G_{33}
	\end{pmatrix} , 
	\begin{pmatrix} 
		u &   \boldsymbol{v}^T & \mathbf p^T\\
		\boldsymbol{v} & \mathbf{S}  & \mathbf Y^T \\
		\mathbf p & \mathbf Y & \mathbf X
	\end{pmatrix}
	\right\rangle.
\end{eqnarray*}

To minimize the Lagrangian, note that
it is always possible to find $\mathbf{G}$ that is positive semidefinite such that the inner product goes to negative infinity, unless the primal decision matrix is itself positive semidefinite. 
The Lagrangian dual can then be written as $$\min_{ \boldsymbol{\alpha}, \boldsymbol{\beta}, \nu, \boldsymbol{\gamma}, \mathbf{\Lambda}} \max_{u, \mathbf{v}, \mathbf{p},\mathbf Y, \mathbf X} L(\cdot),$$ subject to $\mathbf{G} \succeq 0$. Then, the inner maximization is bounded only if constraints \eqref{eq:dual-constraints} are satisfied. 
Thus, the dual can be rewritten as:
\begin{eqnarray*}
	\min_{\alpha, \beta, \nu, \gamma, \Lambda, \rho, \mathbf{G}}&& \boldsymbol{\alpha}^T \mathbf{b} + \boldsymbol{\beta}^T \mathbf{b}^2 + g_1 +  \langle 
	\mathbf{\Lambda}, \mathbf{\Sigma}_e \rangle, 
	\mbox{s.t.} 
	\mathbf{G} \succeq 0, \mbox{ and } \eqref{eq:dual-constraints}.
\end{eqnarray*}
Embedding this into the outer minimization problem yields \eqref{eq:min-min}. \qedhere

\subsection{Proof of Theorem \ref{thm:exact}}
We first rewrite $Z_P$ as below:
\begin{align*}
    Z_P 
    & =\sup_{\bmt z \sim (\bm \mu, \bm \Sigma)} \max \left\{\mathbb{E} [H(\bmt{z})],\mathbb{E} [-H(\bmt{z})]\right\}\\
    & = \max \left\{\sup_{\bmt z \sim (\bm \mu, \bm \Sigma)} \mathbb{E} [H(\bmt{z})],\sup_{\bmt z \sim (\bm \mu, \bm \Sigma)} \mathbb{E} [-H(\bmt{z})]\right\}.
\end{align*}

Define $Z_P^+=\sup_{\bmt z \sim (\bm \mu, \bm \Sigma)} \mathbb{E} [H(\bmt{z})]$ and $Z_P^-=\sup_{\bmt z \sim (\bm \mu, \bm \Sigma)} \mathbb{E} [-H(\bmt{z})]$. Then, $Z_P=\max\{Z_P^+,Z_P^-\}$. In the following, we prove $Z_P^+=Z_C^+$ following similar results in \cite{natarajan2011mixed} and \cite{kong2020appointment}; the proof for $Z_P^-=Z_C^-$ will be similar. 

\begin{lemma}\label{lm:relax}
	$Z_C^+ \ge Z_P^+$.
\end{lemma}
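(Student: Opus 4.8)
The plan is to prove this inequality by the standard ``primal'' (feasible-solution) direction of a moment relaxation: for every admissible distribution of $\bmt z$ I will exhibit a matrix feasible to the program defining $Z_C^+$ whose objective equals $\mathbb E[H(\bmt z)]$. Since $Z_C^+$ is a maximization, this immediately yields $Z_C^+ \ge \mathbb E[H(\bmt z)]$, and taking the supremum over all distributions with $\mathbb E[\bmt z]=\bm\mu$ and $\mathbb E[\bmt z\bmt z^T]=\bm\Sigma$ gives $Z_C^+ \ge Z_P^+$. Concretely, I fix such a distribution $\mathbb P$ and, for $\mathbb P$-almost every realization, select an optimal solution $(\mathbf w,\mathbf y,\mathbf s)$ of the inner maximization defining $H(\bmt z)$ (a measurable selection exists because the inner problem is a parametric LP with nonempty optimal set under complete recourse). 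I then form the augmented random vector $\bm\xi=(1,\bmt z^T,\mathbf w^T,\mathbf y^T,\mathbf s^T)^T$ and set $\mathbf M=\mathbb E[\bm\xi\bm\xi^T]$, partitioned into the five blocks $(1,\bmt z,\mathbf w,\mathbf y,\mathbf s)$ exactly as in the statement; this reads off $\mathbf p_w=\mathbb E[\mathbf w]$, $\mathbf Y_w=\mathbb E[\mathbf w\bmt z^T]$, $\mathbf X_y=\mathbb E[\mathbf y\mathbf y^T]$, and so on.

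Next I would verify feasibility of $\mathbf M$. The top-left $2\times2$ block reproduces $1,\bm\mu,\bm\Sigma$. Because $\mathbf A^T\mathbf w=\mathbf b$ holds deterministically for each realization, taking expectations gives $\mathbf a_i^T\mathbf p_w=b_i$, and since each $\mathbf a_i^T\mathbf w=b_i$ is in fact constant, $\mathbf a_i^T\mathbf X_w\mathbf a_i=\mathbb E[(\mathbf a_i^T\mathbf w)^2]=b_i^2$. The two inner-product equalities come from the recourse identity $\mathbf A\mathbf y-\mathbf s=\bmt z$: rewriting $(\mathbf A\mathbf y)^T(\mathbf A\mathbf y)$ once as $(\mathbf A\mathbf y)^T(\mathbf s+\bmt z)$ and once as $(\mathbf s+\bmt z)^T(\mathbf s+\bmt z)$ and taking expectations yields $\langle\mathbf A^T\mathbf A,\mathbf X_y\rangle=\langle\mathbf A^T,\mathbf Y_y+\mathbf Z_{sy}^T\rangle$ and $\langle\mathbf A^T\mathbf A,\mathbf X_y\rangle=\langle\mathbf I,\mathbf X_s+2\mathbf Y_s+\bm\Sigma\rangle$. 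Finally, the objective of $Z_C^+$ evaluated at $\mathbf M$ is $\langle\mathbf{VV}^T,\mathbf Y_w\rangle-\mathbf b^T\mathbf p_y=\mathbb E[(\mathbf{VV}^T\mathbf w)^T\bmt z]-\mathbb E[\mathbf b^T\mathbf y]=\mathbb E[H(\bmt z)]$, as required.

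The step I expect to be the main obstacle is establishing cone membership, i.e., that $\mathbf M=\mathbb E[\bm\xi\bm\xi^T]$ belongs to the generalized completely positive cone in the statement. Each realized $\bm\xi\bm\xi^T$ is a rank-one matrix $vv^T$ whose factor $v$ obeys the prescribed sign pattern (first entry nonnegative, $\mathbf w\ge0$, $\mathbf s\ge0$, with $\bmt z,\mathbf y$ free), so $\mathbf M$ is a nonnegative mixture of such rank-one matrices and hence lies in the closed convex cone they generate, which is exactly the cone displayed; the finite factorization with $l$ columns then follows from the finiteness of the completely positive rank for a fixed matrix dimension. Making this rigorous requires (i) the measurable selection of inner optimizers, (ii) finiteness of the constructed second moments $\mathbf X_w,\mathbf X_y,\mathbf X_s$ (which holds because the optimizers are piecewise linear in $\bmt z$ and $\bm\Sigma$ is finite), and (iii) closedness of the cone so that the expectation remains inside it. These are exactly the points where I would invoke the arguments of \cite{natarajan2011mixed} and \cite{kong2020appointment}; the reverse inclusion $Z_C^+\le Z_P^+$, which would complete $Z_C^+=Z_P^+$, is a separate (and typically harder) discretization argument not needed for this lemma.
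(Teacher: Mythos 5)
Your proposal is correct and follows essentially the same route as the paper's proof: both construct a feasible point for $Z_C^+$ by taking the expected outer product $\mathbb{E}[\bm\xi\bm\xi^T]$ of the augmented vector $(1,\bmt z^T,\mathbf w^T,\mathbf y^T,\mathbf s^T)^T$ under an arbitrary admissible distribution, verify the linear constraints by taking expectations of the almost-sure identities $\mathbf A^T\mathbf w=\mathbf b$ and $\mathbf A\mathbf y-\mathbf s=\bmt z$, and observe that the objective evaluates to $\mathbb{E}[H(\bmt z)]$. If anything, you are more explicit than the paper about the technical prerequisites (measurable selection of inner optimizers, finiteness of second moments, and closedness of the completely-positive-type cone under the mixture), which the paper leaves implicit.
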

\begin{proof}
Any feasible solution $(w(\bmt z), y(\bmt z), s(\bmt z))$ to $Z_P^+$ satisfies
\begin{align*}
	\mathbf A \mathbf y(\bmt z) - \mathbf s(\bmt z) = \bmt z, 
	\mathbf A^T \mathbf w(\bmt z)=\mathbf b.
\end{align*}

The first constraint, i.e., $\mathbf a_i^T\mathbf y_i(\bmt z)=\tilde{z}_i+s_i(\bmt z) \forall i$, leads to $$\mathbb{E}[(\mathbf a_i^T\mathbf y_i(\bmt z))^2]=\mathbb{E}[(\tilde{z}_i+s_i(\bmt z))^2], \mbox{ and }$$  $$ \mathbb{E}[(\mathbf a_i^T\mathbf y_i(\bmt z))^2]=\mathbb{E}[(\mathbf a_i^T\mathbf y_i(\bmt z))(\tilde{z}_i+s_i(\bmt z))], \forall i.$$ Rearranging the terms, we have
\begin{align*}
	& \langle \mathbf A^T \mathbf A, \mathbb{E}[\mathbf y(\bmt z)\mathbf y(\bmt z)^T]\rangle=\langle \mathbf A^T, \mathbb{E}[\mathbf y(\bmt z)(\mathbf s(\bmt z)+\bmt z)^T]\rangle \\
	& \langle \mathbf A^T \mathbf A, \mathbb{E}[\mathbf y(\bmt z)\mathbf y(\bmt z)^T]\rangle=\langle \mathbf I, \mathbb{E}[(\mathbf s(\bmt z)+\bmt z)(\mathbf s(\bmt z)+\bmt z)^T]\rangle.
\end{align*}

From the second constraint, we have $$\mathbb{E}[\mathbf a_i^T\mathbf w(\bmt z)]=b_i, \mbox{ and } \mathbb{E}[(\mathbf a_i^T\mathbf w(\bmt z))^2]=\mathbf a_i^T \mathbb{E}[\mathbf w(\bmt z)\mathbf w(\bmt z)^T] \mathbf a_{i}=b_i^2, \forall i.$$

Let $\mathbf p_{w}=\mathbb{E}[\mathbf w(\bmt z)]$, $\mathbf p_{y}=\mathbb{E}[\mathbf y(\bmt z)]$, $\mathbf p_{s}=\mathbb{E}[\mathbf s(\bmt z)]$, $\mathbf X_{w}=\mathbb{E}[\mathbf w(\bmt z)\mathbf w(\bmt z)^T]$, $\mathbf X_{y}=\mathbb{E}[\mathbf y(\bmt z)\mathbf y(\bmt z)^T]$, $\mathbf X_{s}=\mathbb{E}[\mathbf s(\bmt z)\mathbf s(\bmt z)^T]$, $\mathbf Y_{w}=\mathbb{E}[\mathbf w(\bmt z)\bmt z^T]$, $\mathbf Y_{y}=\mathbb{E}[\mathbf y(\bmt z)\bmt z^T]$,  $\mathbf Y_{s}=\mathbb{E}[\mathbf s(\bmt z)\bmt z^T]$, $\mathbf Z_{yw}=\mathbb{E}[\mathbf w(\bmt z)\mathbf w(\bmt z)^T]$, $\mathbf Z_{sw}=\mathbb{E}[\mathbf s(\bmt z)\mathbf w(\bmt z)^T]$, $\mathbf Z_{sy}=\mathbb{E}[\mathbf s(\bmt z)\mathbf y(\bmt z)^T]$. They satisfy 
\begin{align*}
\left(\begin{array}{ccccc}
	1 & \bm \mu^{T} & \mathbf p_w^{T} & \mathbf p_y^{T} & \mathbf p_s^{T} \\
	\bm \mu & \bm \Sigma & \mathbf Y_w^{T} & \mathbf Y_y^T & \mathbf Y_s^T\\
	\mathbf p_w & \mathbf Y_w & \mathbf X_w & \mathbf Z_{yw}^T & \mathbf Z_{sw}^T\\
	\mathbf p_y & \mathbf Y_y & \mathbf Z_{yw} & \mathbf X_y & \mathbf Z_{sy}^T \\
	\mathbf p_s & \mathbf Y_s & \mathbf Z_{sw} & \mathbf Z_{sy} & \mathbf X_s \\
\end{array}\right)=\mathbb{E}\left[\left(\begin{array}{c}
1 \\
\bmt z \\
\mathbf w(\bmt z) \\
\mathbf y(\bmt z) \\
\mathbf s(\bmt z)
\end{array}\right)\left(\begin{array}{c}
1 \\
\bmt z \\
\mathbf w(\bmt z) \\
\mathbf y(\bmt z) \\
\mathbf s(\bmt z)
\end{array}\right)^{T}\right].
\end{align*}
Thus, $(\mathbf p_{w},\mathbf p_{y},\mathbf p_{s},\mathbf X_{w},\mathbf X_{y},\mathbf X_{s},\mathbf Y_{w},\mathbf Y_{y},\mathbf Y_{s},\mathbf Z_{yw},\mathbf Z_{sw},\mathbf Z_{sy})$ is a feasible solution to $Z_C^+$ with the objective value
$\langle \mathbf{VV}^T, \mathbf Y_w\rangle-\mathbf b^T \mathbf p_y=\langle \mathbf{VV}^T, \mathbb{E}[\mathbf w(\bmt z)\bmt z^T]\rangle-\mathbf b^T \mathbb{E}[\mathbf y(\bmt z)]=\mathbb{E} [H(\bmt{z})]$. That is, $Z_C^+$ is a relaxation of $Z_P^+$ and we have $Z_C^+ \ge Z_P^+$.
\qedhere
\end{proof}

We then consider a feasible solution to $Z_C^+$. By definition, it satisfies
\begin{align*}
	\left(\begin{array}{ccccc}
		1 & \bm \mu^{T} & \mathbf p_w^{T} & \mathbf p_y^{T} & \mathbf p_s^{T} \\
		\bm \mu & \bm \Sigma & \mathbf Y_w^{T} & \mathbf Y_y^T & \mathbf Y_s^T\\
		\mathbf p_w & \mathbf Y_w & \mathbf X_w & \mathbf Z_{yw}^T & \mathbf Z_{sw}^T\\
		\mathbf p_y & \mathbf Y_y & \mathbf Z_{yw} & \mathbf X_y & \mathbf Z_{sy}^T \\
		\mathbf p_s & \mathbf Y_s & \mathbf Z_{sw} & \mathbf Z_{sy} & \mathbf X_s \\
	\end{array}\right)=\sum_{l\in L}\left(\begin{array}{l}
\alpha_l \\
\bm \beta_l \\
\bm \gamma_l^w \\
\bm \gamma_l^y \\
\bm \gamma_l^s
\end{array}\right)\left(\begin{array}{l}
\alpha_l \\
\bm \beta_l \\
\bm \gamma_l^w \\
\bm \gamma_l^y \\
\bm \gamma_l^s
\end{array}\right)^{T}.
\end{align*}
Define $L_+=\{l\in L | \alpha_l >0\}$ and $L_0=\{l\in L | \alpha_l =0\}$.

\begin{lemma}\label{lm:feasible_y_s}
The above decomposition satisfies $A\bm\gamma_l^y-\bm \gamma_l^s=\bm \beta_l,\forall l\in L$.
\end{lemma}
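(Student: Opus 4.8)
The plan is to extract the lemma directly from the two quadratic equality constraints appearing in the definition of $Z_C^+$, namely
\[
\langle \mathbf{A}^T\mathbf{A}, \mathbf{X}_y\rangle = \langle \mathbf{A}^T, \mathbf{Y}_y + \mathbf{Z}_{sy}^T\rangle
\quad\text{and}\quad
\langle \mathbf{A}^T\mathbf{A}, \mathbf{X}_y\rangle = \langle \mathbf{I}, \mathbf{X}_s + 2\mathbf{Y}_s + \bm{\Sigma}\rangle,
\]
which are precisely the second-moment consequences of the linear identity $\mathbf{A}\mathbf{y}(\bmt{z}) - \mathbf{s}(\bmt{z}) = \bmt{z}$ used in Lemma~\ref{lm:relax}. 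The strategy is to substitute the decomposition $\mathbf{M} = \sum_{l\in L} v_l v_l^T$ with $v_l = (\alpha_l, \bm{\beta}_l, \bm{\gamma}_l^w, \bm{\gamma}_l^y, \bm{\gamma}_l^s)^T$ into these two constraints; I expect them to collapse into a single sum of squares that is forced to vanish, which then pins down each atom individually. Note the sign restrictions on the $V_i$ blocks play no role here: only the two equalities are needed.

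First I would read off the block identities implied by the decomposition, matching each moment block to the outer product of the corresponding sub-blocks of $v_l$: $\mathbf{X}_y = \sum_l \bm{\gamma}_l^y (\bm{\gamma}_l^y)^T$, $\mathbf{Y}_y = \sum_l \bm{\gamma}_l^y \bm{\beta}_l^T$, $\mathbf{Z}_{sy}^T = \sum_l \bm{\gamma}_l^y (\bm{\gamma}_l^s)^T$, $\mathbf{X}_s = \sum_l \bm{\gamma}_l^s (\bm{\gamma}_l^s)^T$, $\mathbf{Y}_s = \sum_l \bm{\gamma}_l^s \bm{\beta}_l^T$, and $\bm{\Sigma} = \sum_l \bm{\beta}_l \bm{\beta}_l^T$. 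Writing $\mathbf{u}_l = \mathbf{A}\bm{\gamma}_l^y$ and $\mathbf{t}_l = \bm{\beta}_l + \bm{\gamma}_l^s$ and reducing the traces, the first constraint becomes $\sum_l \|\mathbf{u}_l\|^2 = \sum_l \mathbf{t}_l^T \mathbf{u}_l$, and the second becomes $\sum_l \|\mathbf{u}_l\|^2 = \sum_l \|\mathbf{t}_l\|^2$.

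The key step is then to complete the square across the atoms:
\[
\sum_{l\in L}\|\mathbf{u}_l - \mathbf{t}_l\|^2 = \sum_l \|\mathbf{u}_l\|^2 - 2\sum_l \mathbf{u}_l^T\mathbf{t}_l + \sum_l\|\mathbf{t}_l\|^2.
\]
Substituting the two identities above makes the right-hand side equal to $\sum_l\|\mathbf{u}_l\|^2 - 2\sum_l\|\mathbf{u}_l\|^2 + \sum_l\|\mathbf{u}_l\|^2 = 0$. Since each summand $\|\mathbf{u}_l - \mathbf{t}_l\|^2$ is nonnegative, every term must vanish, giving $\mathbf{u}_l = \mathbf{t}_l$, i.e. $\mathbf{A}\bm{\gamma}_l^y - \bm{\gamma}_l^s = \bm{\beta}_l$ for each $l\in L$, as claimed.

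I expect the only delicate point to be the bookkeeping: correctly pairing sub-blocks in each outer product so that the non-symmetric blocks $\mathbf{Y}_y$, $\mathbf{Y}_s$, and $\mathbf{Z}_{sy}$ are transposed consistently, and verifying that $\langle \mathbf{A}^T, \cdot\rangle$ and $\langle \mathbf{I}, \cdot\rangle$ reduce under the trace to the intended scalar forms $\bm{\beta}_l^T\mathbf{A}\bm{\gamma}_l^y$ (together with its $\bm{\gamma}_l^s$ counterpart) and $\bm{\beta}_l^T\bm{\gamma}_l^s$. There is no analytic difficulty beyond this; the essence of the argument is recognizing that the two moment equalities jointly act as a sum-of-squares certificate enforcing linear feasibility of each individual atom.
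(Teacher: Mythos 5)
Your proposal is correct and follows essentially the same argument as the paper's proof: the same block identities from the rank-one decomposition, the same reduction of the two trace constraints to $\sum_l \|\mathbf{A}\bm{\gamma}_l^y\|^2 = \sum_l (\bm{\beta}_l+\bm{\gamma}_l^s)^T\mathbf{A}\bm{\gamma}_l^y$ and $\sum_l \|\mathbf{A}\bm{\gamma}_l^y\|^2 = \sum_l \|\bm{\beta}_l+\bm{\gamma}_l^s\|^2$, and the same sum-of-squares completion forcing each atom's residual to vanish. No gaps.
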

\begin{proof}
By the decomposition, we have 
\begin{eqnarray*}
    \bm \Sigma=\sum_{l\in L} \bm{\beta}_l\bm{\beta}_l^T,
    \mathbf X_y=\sum_{l\in L}\bm{\gamma}_l^y{\bm{\gamma}_l^y}^T,
    \mathbf X_s=\sum_{l\in L}\bm{\gamma}_l^s{\bm{\gamma}_l^s}^T,\\
    \mathbf Y_y=\sum_{l\in L}\bm{\gamma}_l^y{\bm{\beta}_l}^T,
    \mathbf Y_s=\sum_{l\in L}\bm{\gamma}_l^s{\bm{\beta}_l}^T,
    \mathbf Z_{sy}^T=\sum_{l\in L}\bm\gamma_l^y {\bm \gamma_l^s}^T.
\end{eqnarray*}

The constraint $\langle \mathbf A^T \mathbf A, \mathbf X_{y}\rangle=\langle \mathbf A^T, \mathbf Y_{y}+\mathbf Z_{sy}^T\rangle$ in $Z_C^+$ is thus equivalent to $\langle \mathbf A^T \mathbf A, \sum_{l\in L}\bm{\gamma}_l^y{\bm{\gamma}_l^y}^T\rangle=\langle \mathbf A^T, \sum_{l\in L}\bm{\gamma}_l^y{\bm{\beta}_l}^T+\sum_{l\in L}\bm\gamma_l^y {\bm \gamma_l^s}^T\rangle$, which can be further rearranged into
\[\sum_{l\in L}(\mathbf A \bm{\gamma}_l^y)^T (\mathbf A \bm{\gamma}_l^y)= \sum_{l\in L}(\bm{\beta}_l+\bm \gamma_l^s)^T(\mathbf A \bm{\gamma}_l^y).\]

Moreover, the constraint $\langle \mathbf A^T \mathbf A, \mathbf X_{y}\rangle=\langle \mathbf I, \mathbf X_{s}+2 \mathbf Y_{s}+\bm \Sigma\rangle$ is equivalent to $\langle \mathbf A^T \mathbf A, \sum_{l\in L}\bm{\gamma}_l^y{\bm{\gamma}_l^y}^T\rangle=\langle \mathbf I, \sum_{l\in L}\bm{\gamma}_l^s{\bm{\gamma}_l^s}^T+2 \sum_{l\in L}\bm{\gamma}_l^s{\bm{\beta}_l}^T+\sum_{l\in L} \bm{\beta}_l\bm{\beta}_l^T\rangle$, which can be further rearranged into
\[\sum_{l\in L}(\mathbf A \bm{\gamma}_l^y)^T (\mathbf A \bm{\gamma}_l^y)= \sum_{l\in L}(\bm{\beta}_l+\bm \gamma_l^s)^T(\bm{\beta}_l+\bm \gamma_l^s).\]

Combining both equations above, we have
\begin{align*}
&\sum_{l\in L}(\mathbf A \bm{\gamma}_l^y-(\bm{\beta}_l+\bm \gamma_l^s))^T (\mathbf A \bm{\gamma}_l^y-(\bm{\beta}_l+\bm \gamma_l^s))\\
&=\sum_{l\in L}(\mathbf A \bm{\gamma}_l^y)^T (\mathbf A \bm{\gamma}_l^y)-2\sum_{l\in L}(\bm{\beta}_l+\bm \gamma_l^s)^T(\mathbf A \bm{\gamma}_l^y)+\sum_{l\in L}(\bm{\beta}_l+\bm \gamma_l^s)^T(\bm{\beta}_l+\bm \gamma_l^s)\\
&=0.
\end{align*}
Note that $(\mathbf A \bm{\gamma}_l^y-(\bm{\beta}_l+\bm \gamma_l^s))^T (\mathbf A \bm{\gamma}_l^y-(\bm{\beta}_l+\bm \gamma_l^s))\ge 0$. We conclude $(\mathbf A \bm{\gamma}_l^y-(\bm{\beta}_l+\bm \gamma_l^s))^T (\mathbf A \bm{\gamma}_l^y-(\bm{\beta}_l+\bm \gamma_l^s))= 0$, i.e., $\mathbf A \bm{\gamma}_l^y-\bm \gamma_l^s=\bm{\beta}_l, \forall l\in L$.
\qedhere
\end{proof}

\begin{lemma}\label{lm:feasible_w}
	The decomposition satisfies $\mathbf{A}^T \frac{\bm \gamma_l^w}{\alpha_l} = \mathbf b, \forall l\in L_+$ and $\bm \gamma_l^w=\mathbf 0, \forall l \in L_0$.
\end{lemma}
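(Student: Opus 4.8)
The plan is to read off the scalar and matrix identities implied by the rank-one decomposition, convert the two $w$-constraints of $Z_C^+$ into a Cauchy--Schwarz equality, and then split into the cases $l\in L_+$ and $l\in L_0$, invoking complete recourse to handle the latter. First I would extract the three facts I need from the displayed decomposition: the $(1,1)$ entry gives the normalization $\sum_{l\in L}\alpha_l^2=1$; the first block row/column gives $\mathbf p_w=\sum_{l\in L}\alpha_l\bm\gamma_l^w$; and the $w$-diagonal block gives $\mathbf X_w=\sum_{l\in L}\bm\gamma_l^w(\bm\gamma_l^w)^T$. Substituting these into the two constraints of $Z_C^+$ that involve $w$, namely $\mathbf a_i^T\mathbf p_w=b_i$ and $\mathbf a_i^T\mathbf X_w\mathbf a_i=b_i^2$, yields for every $i$ the pair
\[
\sum_{l\in L}\alpha_l(\mathbf a_i^T\bm\gamma_l^w)=b_i,\qquad \sum_{l\in L}(\mathbf a_i^T\bm\gamma_l^w)^2=b_i^2.
\]

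Next I would apply the Cauchy--Schwarz inequality to the sequences $(\alpha_l)_l$ and $(\mathbf a_i^T\bm\gamma_l^w)_l$. Using $\sum_l\alpha_l^2=1$ together with the second identity, we obtain
\[
b_i^2=\Bigl(\sum_{l\in L}\alpha_l(\mathbf a_i^T\bm\gamma_l^w)\Bigr)^2\le\Bigl(\sum_{l\in L}\alpha_l^2\Bigr)\Bigl(\sum_{l\in L}(\mathbf a_i^T\bm\gamma_l^w)^2\Bigr)=b_i^2,
\]
so the inequality is in fact an equality. Equality in Cauchy--Schwarz forces the two sequences to be proportional, and matching the proportionality constant against the first identity pins it down, giving $\mathbf a_i^T\bm\gamma_l^w=b_i\alpha_l$ for all $l$ and $i$. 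Collecting over $i$ produces the single vector identity $\mathbf A^T\bm\gamma_l^w=\mathbf b\,\alpha_l$ for every $l\in L$. For $l\in L_+$ (where $\alpha_l>0$), dividing by $\alpha_l$ immediately yields $\mathbf A^T(\bm\gamma_l^w/\alpha_l)=\mathbf b$, which is the first assertion. This part parallels the sum-of-squares identity used to prove Lemma~\ref{lm:feasible_y_s}.

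For $l\in L_0$, the same identity reads $\mathbf A^T\bm\gamma_l^w=\mathbf 0$, which only places $\bm\gamma_l^w$ in the recession cone of the dual feasible set and does not yet give $\bm\gamma_l^w=\mathbf 0$. The hard part — and the only place a structural hypothesis enters — is to upgrade $\mathbf A^T\bm\gamma_l^w=\mathbf 0$ to $\bm\gamma_l^w=\mathbf 0$. Here I would invoke complete recourse: since $h(\bm z)=\max\{\mathbf w^T\bm z:\mathbf w\ge 0,\ \mathbf A^T\mathbf w=\mathbf b\}$ is finite for every $\bm z$, the recession cone $\{\mathbf w\ge 0:\mathbf A^T\mathbf w=\mathbf 0\}$ must be trivial, since otherwise taking $\bm z$ equal to a nonzero recession ray would make $h(\bm z)=+\infty$. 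Because the decomposition draws the $w$-block from $\mathbb R_+^{n\times l}$, we have $\bm\gamma_l^w\ge\mathbf 0$, so $\bm\gamma_l^w$ lies in precisely this trivial cone; hence $\bm\gamma_l^w=\mathbf 0$, completing the argument.
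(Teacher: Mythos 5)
Your proof is correct, and its core argument is the same as the paper's: you extract $\sum_{l}\alpha_l^2=1$, $\mathbf p_w=\sum_{l}\alpha_l\bm\gamma_l^w$, and $\mathbf X_w=\sum_{l}\bm\gamma_l^w(\bm\gamma_l^w)^T$ from the rank-one decomposition, then force equality in Cauchy--Schwarz to conclude $\mathbf a_i^T\bm\gamma_l^w=b_i\alpha_l$ for all $i,l$, which settles the $L_+$ case at once. The one place you genuinely diverge is the $L_0$ case. The paper disposes of it by \emph{supposing} that $\boldsymbol P=\{\mathbf w\ge 0 \mid \mathbf A^T\mathbf w=\mathbf b\}$ is bounded and nonempty, so that $\bm\gamma_l^w\ge \mathbf 0$ together with $\mathbf A^T\bm\gamma_l^w=\mathbf 0$ places $\bm\gamma_l^w$ in a trivial recession cone; this boundedness is an extra hypothesis introduced on the spot rather than part of the theorem's standing assumptions. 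You instead derive the triviality of $\{\mathbf w\ge 0:\mathbf A^T\mathbf w=\mathbf 0\}$ from complete recourse: a nonzero ray $\mathbf r$ in that cone, combined with any $\mathbf w_0\in\boldsymbol P$, would make the dual objective $(\mathbf w_0+t\mathbf r)^T\mathbf r$ unbounded at $\bm z=\mathbf r$, contradicting the finiteness of $h$ that primal feasibility and weak duality guarantee. This is a modest but real improvement: your version of the lemma rests only on hypotheses already in force in Section 2 (complete recourse, plus nonemptiness of $\boldsymbol P$, which is implicit in the paper's strong-duality statement), whereas the paper's proof quietly strengthens its assumptions. The two arguments are otherwise identical, including the use of the cone constraint in $Z_C^+$ to get $\bm\gamma_l^w\ge\mathbf 0$.
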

\begin{proof}
By the decomposition, we have $\mathbf p_w=\sum_{l\in L}\alpha_l\gamma_l^w$ and $\mathbf X_w=\sum_{l\in L}\bm{\gamma}_l^w{\bm{\gamma}_l^w}^T$. The first two constraints $\mathbf a_i^T \mathbf p_{w}=b_i$ and $\mathbf a_i^T \mathbf X_{w} \mathbf a_{i}=b_{i}^{2}$ in $Z_C^+$ are equivalent to
\[\mathbf a_i^T \sum_{l\in L}\alpha_l\gamma_l^w=b_i \mbox{ and } \mathbf a_i^T \sum_{l\in L}\bm{\gamma}_l^w{\bm{\gamma}_l^w}^T \mathbf a_{i}=b_{i}^{2}
\]

Note that $\sum_{l\in L}\alpha_l^2=1$. The above equations lead to
\[\left(\sum_{l\in L}\alpha_l^2\right)\left(\sum_{l\in L}(\mathbf a_i^T \bm{\gamma}_l^w)^2\right)=b_i^2=\left(\sum_{l\in L}\alpha_l\mathbf a_i^T \gamma_l^w\right)^2.\]
By the equality conditions of Cauchy-Schwartz inequality, for each $i$, $\exists\zeta_i$ such that $\zeta_i \alpha_l=\mathbf a_i^T \bm{\gamma}_l^w, \forall l \in L$. Recall that $b_i= \sum_{l\in L}\alpha_l\mathbf a_i^T\gamma_l^w=\zeta_i \sum_{l\in L}\alpha_l^2=\zeta_i$. Then, for all $l\in L_+$, we have $\mathbf a_i^T \frac{\bm{\gamma}_l^w}{\alpha_l}=b_i$, which leads to $\mathbf{A}^T \frac{\bm \gamma_l^w}{\alpha_l} = \mathbf b, \forall l\in L_+$; for all $l\in L_0$, we have $\mathbf a_i^T \bm{\gamma}_l^w=b_i\alpha_k=0$, which leads to $\mathbf A^T \bm{\gamma}_l^w=\mathbf 0, \forall l\in L_0$. Suppose the feasible region $\boldsymbol{P} = \{\mathbf{w} \ge 0 | \mathbf{A}^T \mathbf{w} = b \}$ for the second stage problem is bounded and nonempty, then we conclude $\bm{\gamma}_l^w=\mathbf 0, \forall l\in L_0$.\qedhere
\end{proof}
By the above lemma, we can decompose the optimal solution to $Z_C^+$ as
\begin{align*}
	&\left(\begin{array}{ccccc}
		1 & \bm \mu^{T} & \mathbf p_w^{*T} & \mathbf p_y^{*T} & \mathbf p_s^{*T} \\
		\bm \mu & \bm \Sigma & \mathbf Y_w^{*T} & \mathbf Y_y^{*T} & \mathbf Y_s^{*T}\\
		\mathbf p_w^* & \mathbf Y_w^* & \mathbf X_w^* & \mathbf Z_{yw}^{*T} & \mathbf Z_{sw}^{*T}\\
		\mathbf p_y^* & \mathbf Y_y^* & \mathbf Z_{yw}^* & \mathbf X_y^* & \mathbf Z_{sy}^{*T} \\
		\mathbf p_s^* & \mathbf Y_s^* & \mathbf Z_{sw}^* & \mathbf Z_{sy}^* & \mathbf X_s^* \\
	\end{array}\right)\\
 &=\sum_{l\in L_+}\alpha_l^2\left(\begin{array}{l}
		1 \\
		\frac{\bm \beta_l}{\alpha_l} \\
		\frac{\bm \gamma_l^w}{\alpha_l} \\
		\frac{\bm \gamma_l^y}{\alpha_l} \\
		\frac{\bm \gamma_l^s}{\alpha_l}
	\end{array}\right)\left(\begin{array}{l}
		1 \\
		\frac{\bm \beta_l}{\alpha_l} \\
		\frac{\bm \gamma_l^w}{\alpha_l} \\
		\frac{\bm \gamma_l^y}{\alpha_l} \\
		\frac{\bm \gamma_l^s}{\alpha_l}
	\end{array}\right)^{T}+\sum_{l\in L_0}\left(\begin{array}{l}
	0 \\
	\bm \beta_l \\
	\bm 0 \\
	\bm \gamma_l^y \\
	\bm \gamma_l^s
	\end{array}\right)\left(\begin{array}{l}
	0 \\
	\bm \beta_l \\
	\bm 0 \\
	\bm \gamma_l^y \\
	\bm \gamma_l^s
	\end{array}\right)^{T}.
\end{align*}

Let $\epsilon\in(0,1)$. We define a sequence of random variables $\tilde{\bm{z}}_\epsilon$ and their corresponding feasible solutions $\mathbf{w}^*\left(\tilde{\bm z}_{\epsilon}\right), \mathbf{y}^*\left(\tilde{\bm z}_{\epsilon}\right),\mathbf{s}^*\left(\tilde{\bm z}_{\epsilon}\right)$ (by Lemmas~\ref{lm:feasible_y_s} and~\ref{lm:feasible_w}) with the following distribution
\begin{align*}
	&\mathbb{P}\left(\left(\tilde{\bm z}_{\epsilon}, \mathbf{w}^*\left(\tilde{\bm z}_{\epsilon}\right), \mathbf{y}^*\left(\tilde{\bm z}_{\epsilon}\right),\mathbf{s}^*\left(\tilde{\bm z}_{\epsilon}\right)\right)=\left(\frac{\boldsymbol{\beta}_{l}}{\alpha_{l}}, \frac{\boldsymbol{\gamma}_{l}^w}{\alpha_{l}}, \frac{\boldsymbol{\gamma}_{l}^y}{\alpha_{l}}, \frac{\boldsymbol{\gamma}_{l}^s}{\alpha_{l}}\right)\right)\\
 &=\left(1-\epsilon^{2}\right) \alpha_{l}^{2}, \forall l \in L_{+}, \\
	&\mathbb{P}\left(\left(\tilde{\bm z}_{\epsilon}, \mathbf{w}^*\left(\tilde{\bm z}_{\epsilon}\right), \mathbf{y}^*\left(\tilde{\bm z}_{\epsilon}\right),\mathbf{s}^*\left(\tilde{\bm z}_{\epsilon}\right)\right)=\left(\frac{\sqrt{|L_0|}\boldsymbol{\beta}_{l}}{\epsilon}, \mathbf{w}^\dagger, \frac{\sqrt{|L_0|}\boldsymbol{\gamma}_{l}^y}{\epsilon}, \frac{\sqrt{|L_0|}\boldsymbol{\gamma}_{l}^s}{\epsilon}\right)\right)\\
 &=\frac{\epsilon^2}{\left|L_0\right|}, \forall l \in L_0,
\end{align*}
where $\mathbf{w}^\dagger\in \mathbf{P}$ is any feasible solution $\mathbf w$ to $H(\bmt z)$.

We can verify that 
\begin{align*}
&\mathbb{E}\left[\left(\begin{array}{c}
	1\\
	\tilde{\bm z}_{\epsilon}\\
	\mathbf{w}^*\left(\tilde{\bm z}_{\epsilon}\right)\\
	\mathbf{y}^*\left(\tilde{\bm z}_{\epsilon}\right)\\
	\mathbf{s}^*\left(\tilde{\bm z}_{\epsilon}\right)
\end{array}\right)\left(\begin{array}{c}
	1\\
	\tilde{\bm z}_{\epsilon}\\
	\mathbf{w}^*\left(\tilde{\bm z}_{\epsilon}\right)\\
	\mathbf{y}^*\left(\tilde{\bm z}_{\epsilon}\right)\\
	\mathbf{s}^*\left(\tilde{\bm z}_{\epsilon}\right)
\end{array}\right)^{T}\right]\\
&=\sum_{l\in L_+}(1-\epsilon^2)\alpha_l^2\left(\begin{array}{c}
1 \\
\frac{\bm \beta_l}{\alpha_l} \\
\frac{\bm \gamma_l^w}{\alpha_l} \\
\frac{\bm \gamma_l^y}{\alpha_l} \\
\frac{\bm \gamma_l^s}{\alpha_l}
\end{array}\right)\left(\begin{array}{c}
1 \\
\frac{\bm \beta_l}{\alpha_l} \\
\frac{\bm \gamma_l^w}{\alpha_l} \\
\frac{\bm \gamma_l^y}{\alpha_l} \\
\frac{\bm \gamma_l^s}{\alpha_l}
\end{array}\right)^{T}+\sum_{l\in L_0}\frac{\epsilon^2}{\left|L_0\right|}\left(\begin{array}{c}
1 \\
\frac{\sqrt{|L_0|}\bm \beta_l}{\epsilon} \\
\mathbf{w}^\dagger \\
\frac{\sqrt{|L_0|}\bm \gamma_l^y}{\epsilon} \\
\frac{\sqrt{|L_0|}\bm \gamma_l^s}{\epsilon}
\end{array}\right)\left(\begin{array}{c}
1 \\
\frac{\sqrt{|L_0|}\bm \beta_l}{\epsilon} \\
\mathbf{w}^\dagger \\
\frac{\sqrt{|L_0|}\bm \gamma_l^y}{\epsilon} \\
\frac{\sqrt{|L_0|}\bm \gamma_l^s}{\epsilon}
\end{array}\right)^{T}\\
&\overrightarrow{\epsilon \downarrow 0} \left(\begin{array}{ccccc}
	1 & \bm \mu^{T} & \mathbf p_w^{*T} & \mathbf p_y^{*T} & \mathbf p_s^{*T} \\
	\bm \mu & \bm \Sigma & \mathbf Y_w^{*T} & \mathbf Y_y^{*T} & \mathbf Y_s^{*T}\\
	\mathbf p_w^* & \mathbf Y_w^* & \mathbf X_w^* & \mathbf Z_{yw}^{*T} & \mathbf Z_{sw}^{*T}\\
	\mathbf p_y^* & \mathbf Y_y^* & \mathbf Z_{yw}^* & \mathbf X_y^* & \mathbf Z_{sy}^{*T} \\
	\mathbf p_s^* & \mathbf Y_s^* & \mathbf Z_{sw}^* & \mathbf Z_{sy}^* & \mathbf X_s^* \\
\end{array}\right)
\end{align*}

As $\epsilon \downarrow 0$, the random vectors $\left(\tilde{\bm z}_{\epsilon}, \mathbf{w}^*\left(\tilde{\bm z}_{\epsilon}\right), \mathbf{y}^*\left(\tilde{\bm z}_{\epsilon}\right),\mathbf{s}^*\left(\tilde{\bm z}_{\epsilon}\right)\right)$ converge almost surely (a.s.) to $\left(\tilde{\bm z}^*, \mathbf{w}^*\left(\tilde{\bm z}^*\right), \mathbf{y}^*\left(\tilde{\bm z}^*\right),\mathbf{s}^*\left(\tilde{\bm z}^*\right)\right)$ with distribution below: \[\mathbb{P}\left(\left(\tilde{\bm z}^*, \mathbf{w}^*\left(\tilde{\bm z}^*\right), \mathbf{y}^*\left(\tilde{\bm z}^*\right),\mathbf{s}^*\left(\tilde{\bm z}^*\right)\right)=\left(\frac{\boldsymbol{\beta}_{l}}{\alpha_{l}}, \frac{\boldsymbol{\gamma}_{l}^w}{\alpha_{l}}, \frac{\boldsymbol{\gamma}_{l}^y}{\alpha_{l}}, \frac{\boldsymbol{\gamma}_{l}^s}{\alpha_{l}}\right)\right)= \alpha_{l}^{2}, \forall l\in L_+.\]

We then have
\begin{align*}
\lim _{\epsilon \downarrow 0} \mathbb{E}\left[H(\bmt z_\epsilon)\right]
&=\lim _{\epsilon \downarrow 0} \left(\sum_{l\in L_+}(1-\epsilon^2)\alpha_l^2\left(\mathbf{V} \mathbf{V}^T  \frac{\bm{\gamma}_{l}^w}{\alpha_{l}}\frac{\bm{\beta}_{l}}{\alpha_{l}} -\mathbf{b}^T \frac{\bm{\gamma}_l^y}{\alpha_l}\right) \right. \\
&\quad \left. +\sum_{l\in L_0}\frac{\epsilon^2}{\left|L_0\right|}\left(\mathbf{V} \mathbf{V}^T  \mathbf{w}^\dagger\frac{\sqrt{|L_0|}\boldsymbol{\beta}_{l}}{\epsilon} -\mathbf{b}^T \frac{\sqrt{|L_0|}\boldsymbol{\gamma}_{l}^y}{\epsilon}\right)\right)\\
&=\sum_{l\in L_+}\left(\mathbf{V} \mathbf{V}^T  \bm{\gamma}_{l}^w\bm{\beta}_{l} -\mathbf{b}^T \bm{\gamma}_l^y\alpha_l\right)=\mathbb{E}\left[H(\bmt z^*)\right].
\end{align*}

By the similar limiting argument on page 719 of \cite{natarajan2011mixed}, we have $Z_P^+=\sup_{\bmt z \sim (\bm \mu, \bm \Sigma)} \mathbb{E} [H(\bmt{z})]\ge \mathbb{E}\left[H(\bmt z^*)\right]$ and thus
\begin{align*}
	Z_P^+&=\sup_{\bmt z \sim (\bm \mu, \bm \Sigma)} \mathbb{E} [H(\bmt{z})]\\
 &\ge \mathbb{E}\left[H(\bmt z^*)\right]=\mathbf{V} \mathbf{V}^T  \sum_{l\in L_+}\bm{\gamma}_{l}^w\bm{\beta}_{l} -\mathbf{b}^T\sum_{l\in L_+} \bm{\gamma}_l^y\alpha_l\\
 &\quad =\langle \mathbf{VV}^T, \mathbf Y_w^*\rangle-\mathbf b^T \mathbf p_y^*=Z_C^+.
\end{align*}

Combining with Lemma~\ref{lm:relax}, we conclude $Z_C^+=Z_P^+$. To prove $Z_C^-=Z_P^-$, we only need to change the sign of the objective and the proof follows as $Z_C^-$ and $Z_C^+$ share the same constraints.

\end{document}